\documentclass[twoside,11pt]{article}

\usepackage{blindtext}

\usepackage[utf8]{inputenc}
\usepackage[T1]{fontenc}
\usepackage{lmodern}
\usepackage{hyperref}
\usepackage{url}            % simple URL typesetting
\usepackage{booktabs}       % professional-quality tables
\usepackage{amsfonts}       % blackboard math symbols
\usepackage[numbers]{natbib}
\usepackage{nicefrac}       % compact symbols for 1/2, etc.
\usepackage{algorithm,algpseudocode}
\usepackage{amsfonts}
\usepackage{multicol}
\usepackage{amsmath}
\usepackage{amsthm}
\usepackage{amssymb}
\usepackage[title]{appendix}
\usepackage{bm}
\usepackage{courier}
\usepackage[usenames,dvipsnames]{color}
\usepackage{enumitem}
\usepackage{graphicx}
\usepackage[margin=0.9in]{geometry}
\usepackage{url}
\usepackage{multirow}
\usepackage[dvipsnames]{xcolor}
\usepackage{epsfig}
\usepackage{graphicx}
\usepackage{subcaption}
\usepackage{wrapfig}   % <-- defines wrapfigure
\usepackage{caption}   % (optional) nicer caption spacing
\hypersetup{
	colorlinks,
	linkcolor={red!80!black},
	citecolor={blue!50!black},
	urlcolor={blue!80!black}
}

\usepackage{subcaption}
\usepackage{graphicx}
\usepackage{caption}
\title{SOCKET: SOft Collision Kernel EsTimator for Sparse Attention}

\newcommand{\samethanks}[1][\value{footnote}]{\footnotemark[#1]}

\date{}
\author{
  Sahil Joshi\thanks{\scriptsize Department of Computer Science, Rice University, TX, USA. \texttt{\{sj157,ac508,wb20,ask20,es100,el72,as143\}@rice.edu}}%
  \and Agniva Chowdhury\samethanks[1]
  \and Wyatt Bellinger\samethanks[1]
  \and Amar Kanakamedala\samethanks[1]
  \and Ekam Singh\samethanks[1]
  \and Hoang Anh Duy Le\samethanks[1]
  \and Aditya Desai\thanks{\scriptsize Department of Electrical Engineering and Computer Sciences, UC Berkeley, CA, USA. \texttt{apdesai@berkeley.edu}}
  \and Anshumali Shrivastava\samethanks[1]
}

\usepackage{bibentry}

\input{./Definitions}

\begin{document}

\maketitle

\begin{abstract}
\noindent
Exploiting sparsity during long-context inference is key to scaling large language models, as attention dominates the cost of autoregressive decoding. Sparse attention reduces this cost by restricting computation to a subset of tokens, but its effectiveness depends on efficient scoring and selection at inference time. We revisit Locality-Sensitive Hashing (LSH) and introduce SOCKET, a SOft Collision Kernel EsTimator that replaces hard bucket matches with probabilistic, similarity-aware aggregation. Traditional LSH yields binary collision signals that limit ranking quality and require substantial memory to perform well. In contrast, soft LSH accumulates graded collision evidence across hash tables, preserving top-k ordering with significantly less memory. This reframes LSH from a candidate generator into a principled scoring kernel for sparse attention. Leveraging this property, SOCKET enables efficient token selection without ad hoc voting and matches or surpasses prior sparse attention methods across multiple long-context benchmarks. With a custom CUDA scoring kernel and a Flash Decode Triton backend, SOCKET achieves up to 1.5$\times$ higher throughput than FlashAttention. Code is open-sourced at \url{https://github.com/amarka8/SOCKET}.
\end{abstract}

\section{Introduction}
\label{sec:intro}
Large language models (LLMs), such as GPT~\citep{brown2020language} and Llama~\citep{touvron2023llama}, have achieved remarkable performance in next-token prediction, enabling a wide range of applications in language understanding and generation~\citep{openai2023gpt4, chowdhery2022palm, hoffmann2022training,  raffel2020exploring}. To support more powerful in-context learning and increasingly complex applications~\citep{dong2022survey, press2022train, ratner2023parallel, wei2022chain}, the maximum input length supported during inference has grown rapidly--from 2K–4K tokens~\citep{devlin2019bert, vaswani2017attention} to 32K~\citep{beltagy2020longformer, chowdhery2022palm}, 128K~\citep{openai2023gpt4, touvron2023llama}, and even millions of tokens~\citep{zhang2023ringattention}. Beyond language, LLMs have also been extended to multimodal domains such as vision, video, code, databases, and scientific discovery~\citep{jumper2021highly, li2023blip, li2023starcoder, radford2021learning,  wu2023video, zhou2024dbgpt}.
\\
\\
\noindent
LLM inference consists of an initial prefilling phase followed by iterative decoding steps that generate one token at a time. During prefilling, the model processes the full input sequence and computes key–value (KV) representations for all tokens. During decoding, only the most recently generated token is processed, and its KV representations are appended to the cache, avoiding redundant computation. However, each decoding step requires the new token to attend to all previously cached tokens, making attention computation increasingly memory-bound as the context grows. When the KV cache exceeds GPU memory capacity and is partially offloaded to CPU memory, additional CPU–GPU transfers further exacerbate this bottleneck. These effects reveal a fundamental scalability limitation of dense attention for long-context inference. Sparse attention addresses this limitation by restricting attention to a selected subset of tokens, reducing memory movement while approximating dense attention behavior.
\\
\noindent
The attention output for a query $\mathbf{q}$ under scaled dot-product attention (SDPA) is given by
\begin{equation}
\mathbf{y}(\mathbf{q}) \;=\; \sum_{i=1}^{n} a_i \, \mathbf{v}_i,
\label{eq:sdpa}
\end{equation}
where
$a_i = \nicefrac{\exp(\mathbf{k}_i^\top \mathbf{q})}{\sum_{j=1}^{n} \exp(\mathbf{k}_j^\top \mathbf{q})}$ and, $\mathbf{k}_i,\mathbf{v}_i\in\R{d}$  denote the key and value vectors associated with token $i$, and $a_i$ represents its attention weight\footnote{For simplicity, we avoid using the $\nicefrac{1}{\sqrt{d}}$ factor in the definition.}. The central challenge in approximating dense attention lies in identifying the tokens that contribute most to this sum. Prior work has shown that, in hindsight, the dominant contributors are those with large values of $a_i \lVert \mathbf{v}_i \rVert_2$~\citep{desai2025hashattention}. This observation motivates selecting tokens with the largest query--key inner products $\mathbf{k}_i^\top \mathbf{q}$, commonly referred to as \emph{top-$k$} selection. Related extensions, such as top-$p$, further adapt this strategy by dynamically allocating token budgets across attention heads. Consequently, much of the sparse attention literature has focused on efficiently approximating top-$k$ selection~\citep{desai2025hashattention, li2024b, hooper2024fast, pmlr-v235-tang24l, yang2024double, pqcache2025} or top-$p$ selection~\citep{zhu2025topp}.

\begin{figure*}[t]
\centering

% ------------------ LEFT: TABLE ------------------
\begin{minipage}{0.58\linewidth}
\centering
\footnotesize
\setlength{\tabcolsep}{3pt}
\renewcommand{\arraystretch}{0.95}
\begin{tabular}{@{}lcrrrrrrrr@{}}
\toprule
Method & Spr & Mem & nm2 & nm3 & vt & fwe & qa1 & qa2 & avg \\
\midrule
PQcache  & 5$\times$ & 256 & 100 & 99 & 98.2 & 92.7 & 81 & 51 & 86.9 \\
Quest    & 5$\times$ & 512 & 100 & 99 & 97.6 & 89.7 & 85 & 54 & 87.5 \\
DS       & 5$\times$ & 512 &  91 & 98 & 97.8 & 93.0 & 81 & 51 & 85.3 \\
HashAttn & 5$\times$ & 128 &  97 & 97 & 94.2 & 93.7 & 83 & 49 & 85.6 \\
MagicPig & 5$\times$ & 1024 &  10 &  0 & 82.8 & 91.7 & 38 & 42 & 44.1 \\
SOCKET   & 5$\times$ &600 & 97 &100 & 95.2 & 91.7 & 84 & 53 & 86.8 \\
\midrule
PQcache  & 10$\times$ & 256 &  98 & 99 & 96.4 & 92.3 & 81 & 51 & 86.2 \\
Quest    & 10$\times$ & 512 & 100 & 99 & 95.0 & 88.3 & 86 & 53 & 86.8 \\
DS       & 10$\times$ & 512 &  91 & 97 & 95.2 & 92.3 & 74 & 51 & 83.4 \\
HashAttn & 10$\times$ & 128 &  87 & 86 & 88.6 & 92.3 & 79 & 50 & 80.4 \\
MagicPig & 10$\times$ & 1024 &   2 &  0 & 32.2 & 83.0 & 35 & 29 & 30.2 \\
SOCKET   & 10$\times$ &600& 95 &100 & 94.2 & 88.7 & 82 & 53 & 85.5 \\
\midrule
PQcache  &  20$\times$ & 256 & 93 & 92 & 94.2 & 87.7 & 82 & 50 & 83.1 \\
Quest    &  20$\times$ & 512 & 90 & 96 & 91.4 & 87.7 & 84 & 54 & 83.8 \\
DS       &  20$\times$ & 512 & 49 & 82 & 90.2 & 92.3 & 68 & 51 & 72.0 \\
HashAttn &  20$\times$& 128 & 73 & 45 & 81.0 & 89.7 & 75 & 51 & 69.1 \\
MagicPig &  20$\times$ &1024 &  2 &  0 &  0.0 & 83.0 & 35 & 29 & 24.8 \\
SOCKET   &  20$\times$ &600 & 93 & 92 & 91.4 & 86.0 & 82 & 52 & 82.7 \\
\midrule
PQcache  &  50$\times$ & 256 & 64 & 62 & 86.0 & 83.3 & 73 & 49 & 69.5 \\
Quest    &  50$\times$& 512 & 74 & 30 & 76.0 & 80.0 & 71 & 54 & 64.1 \\
DS       & 50$\times$ & 512 & 20 &  1 & 66.4 & 88.3 & 48 & 44 & 44.6 \\
HashAttn &  50$\times$ & 128 & 32 &  0 & 72.2 & 83.7 & 63 & 46 & 49.4 \\
MagicPig &  50$\times$ &1024 &  1 &  0 &  0.0 & 61.7 & 25 & 29 & 19.45 \\
SOCKET   &  50$\times$ &600 & 83 & 74 & 83.6 & 64.0 & 77 & 50 & \textbf{71.9} \\
\bottomrule
\end{tabular}
\captionof{table}{Performance across sparsity levels on RULER-HARD-32K .
Mem denotes additional memory (bits/token) beyond the KV cache. Spr denotes sparsity.}
\label{tab:lama31-ruler}

\end{minipage}
\hfill
% ------------------ RIGHT: FIGURE ------------------
\begin{minipage}{0.38\linewidth}
\centering
\includegraphics[width=\linewidth]{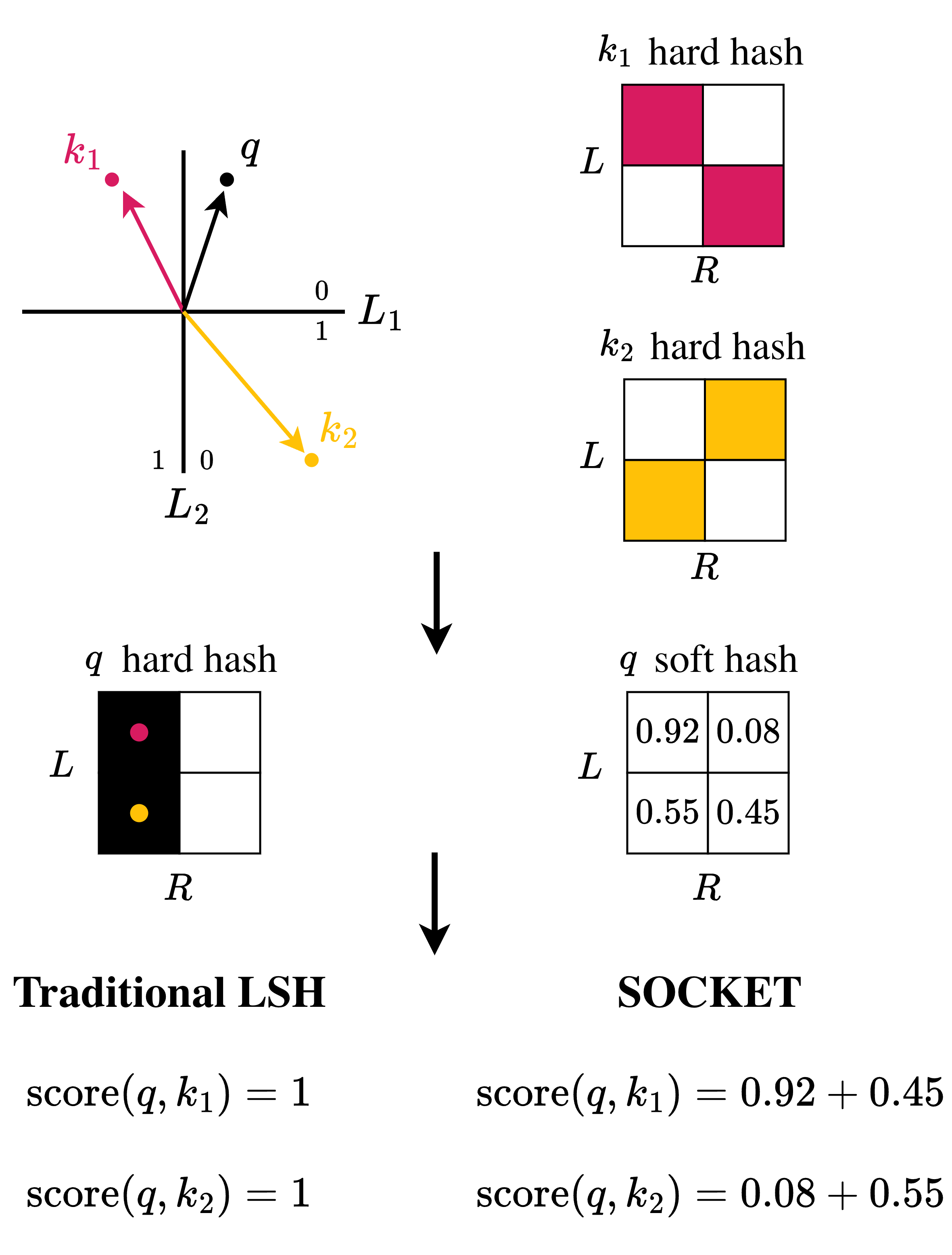}
\captionof{figure}{Traditional LSH assigns binary scores based on hash collisions, while soft LSH produces continuous scores via probabilistic bucket assignments. As a result, soft LSH induces a stable ranking: since $k_1$ is closer to $q$ than $k_2$, we have $\mathrm{score}(q, k_1) > \mathrm{score}(q, k_2)$.}
\label{fig:soft-vs-hard}
\end{minipage}

\end{figure*}
\noindent
Locality-Sensitive Hashing (LSH)~\cite{IndykMotwani1998LSH}, in particular, is a widely used randomized technique for identifying similar keys in high-dimensional spaces. In its traditional form, LSH applies random projections followed by a sign function, producing binary collision indicators between a query and a key. However, this \emph{hard} formulation is poorly suited for ranking stability of important keys: binary collisions cannot express partial similarity, leading to coarse and noisy rankings (see fig.~\ref{fig:ranking_metrics}). Soft LSH resolves this limitation by aggregating collision evidence across multiple hash tables to produce lightweight, similarity-aware scores for every key. Recently, randomized data structures have emerged as a promising approach for approximating attention. RACE-based sketches~\citep{pmlr-v119-coleman20a, ColemanS20-RACE-KDE} have been used to approximate softmax attention in linear time~\citep{joshi2026race}, while LSH-based methods have been explored to approximate attention and retrieval~\citep{gionis1999similarity, kitaev2020reformer, liu2011hashing}. These works highlight the potential of randomized, data-independent primitives to reduce attention cost without relying on expensive data-dependent preprocessing.
\\
\noindent
Unlike most existing sparse attention methods, SOCKET is \emph{data-agnostic}. This design choice yields three key advantages. First, it enables substantially faster time-to-first-token (TTFT) compared to data-dependent clustering approaches such as $k$-means, as shown in fig.~\ref{fig:pq_gpu}. Second, it enables deployment without retraining or calibration, making it robust to distribution shifts during inference. Finally, it allows for interpretable theoretical guarantees with respect to a kernel that closely mimics the softmax kernel, as formalized in Theorem~\ref{thm:main}.

\begin{table}[ht]
\centering
\setlength{\tabcolsep}{4pt}
\caption{Computational cost and memory overhead during retrieval for SOCKET and traditional LSH. All results are benchmarked using our custom CUDA scoring kernel. Avg Score represents the average retrieval accuracy for RULER-HARD-32K dataset using Llama-3.1-8B-Instruct.}
\label{tab:method_comparison}
\vspace{3pt}
\begin{tabular}{lcccccc}
\hline
\textbf{Method} & \textbf{(P, L)} & \textbf{Memory (GB)} & \textbf{Overhead} & \textbf{Time (ms)} & \textbf{Overhead} & \textbf{Avg Score} \\
\hline
SOCKET & (10, 60)  & 1.048 & 1.00$\times$ & 0.387 & 1.00$\times$ & 85.08 \\
LSH    & (10, 60)  & 1.049 & 1.00$\times$ & 0.387 & 1.00$\times$ & 10.00 \\
% LSH    & (2, 300)  & 2.953 & 2.81$\times$ & 1.027 & 2.65$\times$ & 84 \\
LSH    & (2, 300)  & 2.953 & 2.81$\times$ & 1.027 & 2.65$\times$ & 84 \\
% LSH    & (2, 350)  & 3.353 & 3.19$\times$ & 1.209 & 3.12$\times$ & 84.04 \\
LSH    & (2, 400)  & 3.753 & 3.57$\times$ & 1.328 & 3.43$\times$ & 83.76 \\
% LSH    & (2, 450)  & 4.153 & 3.96$\times$ & 1.480 & 3.82$\times$ & 83.36 \\
LSH    & (2, 500)  & 4.554 & 4.34$\times$ & 1.629 & 4.20$\times$ & 84.80 \\
\hline
\end{tabular}
\end{table}
\noindent
\textbf{Key Idea:} Our preliminary experiments with a \emph{hard} LSH–based scorer show that it fails to rank candidate keys according to their true importance under the same memory budget, as illustrated in fig.~\ref{fig:ranking_metrics}. This observation motivates a re-examination of the role of LSH in sparse attention and raises a fundamental question: \emph{Can we augment traditional LSH to more effectively rank candidates by their relevance?}
\\
\\
\noindent
In \textsc{SOCKET}, each key is assigned a continuous score based on soft collision probabilities across multiple hash tables, weighted by the corresponding value vector norms. This turns LSH from a binary filter into a similarity-aware scoring kernel that enables stable key ranking. Crucially, these scores allow efficient key selection without accessing full key or value vectors. From a systems perspective, exact top-$k$ selection requires reading full key vectors (e.g., 128 bfloat values per token), whereas \textsc{SOCKET} uses compact hash representations, retrieving only precomputed bucket indices (e.g., $\sim$600 bits per token) to compute query-dependent soft counts, along with a single integer (value norm) per key. This substantially reduces memory traffic during decoding while preserving ranking fidelity. 
% This substantially reduces memory traffic during decoding while preserving ranking fidelity. 
We present ablations of both \textsc{SOCKET} and a \textit{hard} LSH estimator in Sections~\ref{app:ablation_socket} and~\ref{app:ablations_lsh}. The results show that traditional LSH requires many more hash tables to match \textsc{SOCKET}'s retrieval accuracy, incurring higher memory overhead and latency (Table~\ref{tab:method_comparison}). Overall, \textsc{SOCKET} improves the accuracy--efficiency trade-off for data-agnostic sparse attention. In addition to these findings and extensive empirical validation, we make the following contributions:\\
\textbf{I. Soft LSH as a \emph{stable ranker}:} We propose a \textit{data-agnostic soft} LSH mechanism that serves as a substantially more effective ranking function than LSH, making it well suited for sparse attention. \\
\textbf{II. Efficiency with SOCKET:} We design a custom CUDA kernel for scoring of keys and show a $1.5\times$ throughput speedup over FlashAttention during decoding using GPT-FAST. \\
\textbf{III. Theoretical Insights:} We provide theoretical guarantees showing that attention computed using soft LSH scores closely approximates a kernel that is very similar to softmax.
\begin{figure*}[t]
    \centering
    \begin{minipage}{0.32\linewidth}
        \centering
        \includegraphics[width=\linewidth]{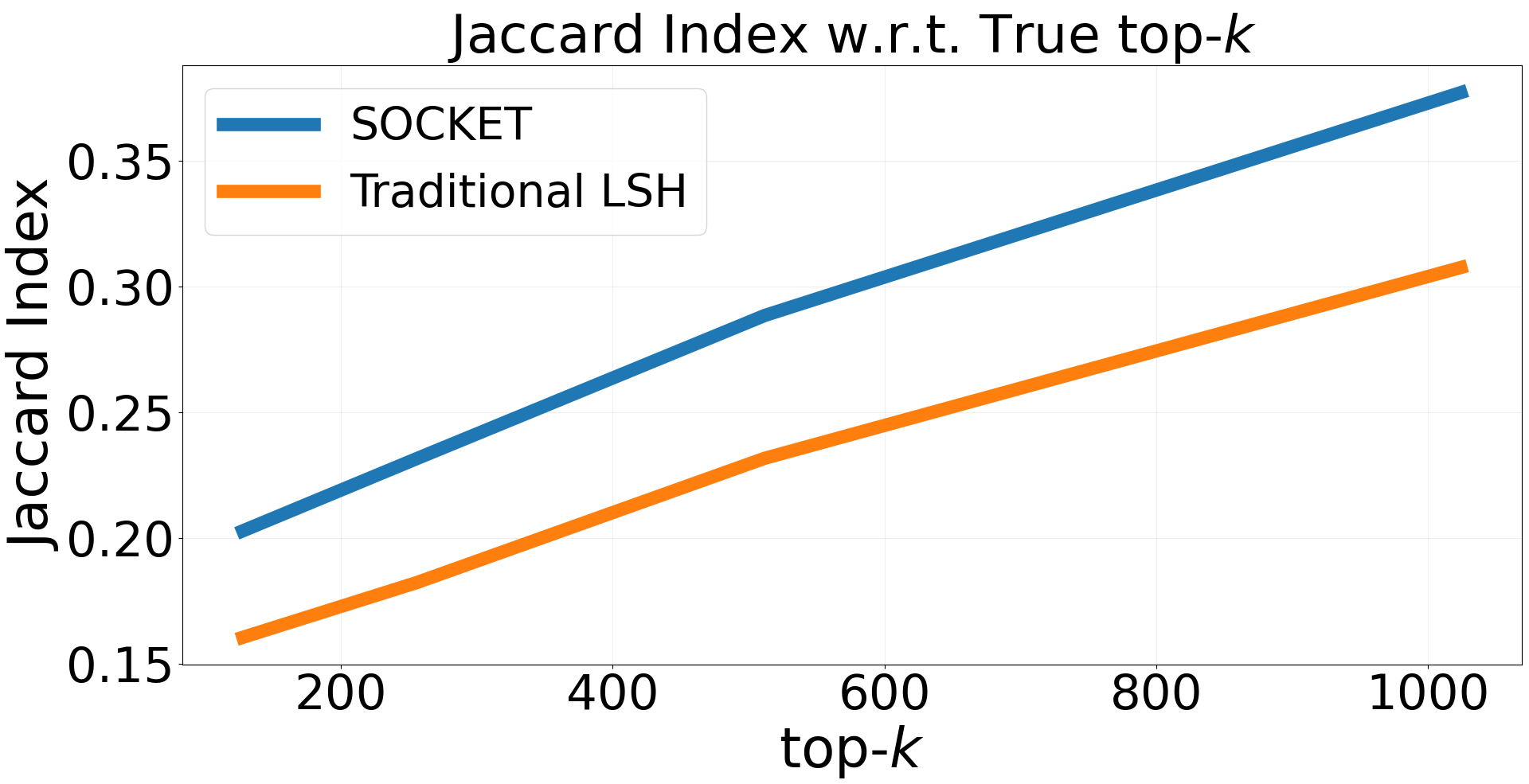}
        \caption*{Jaccard}
    \end{minipage}\hfill
    \begin{minipage}{0.32\linewidth}
        \centering
        \includegraphics[width=\linewidth]{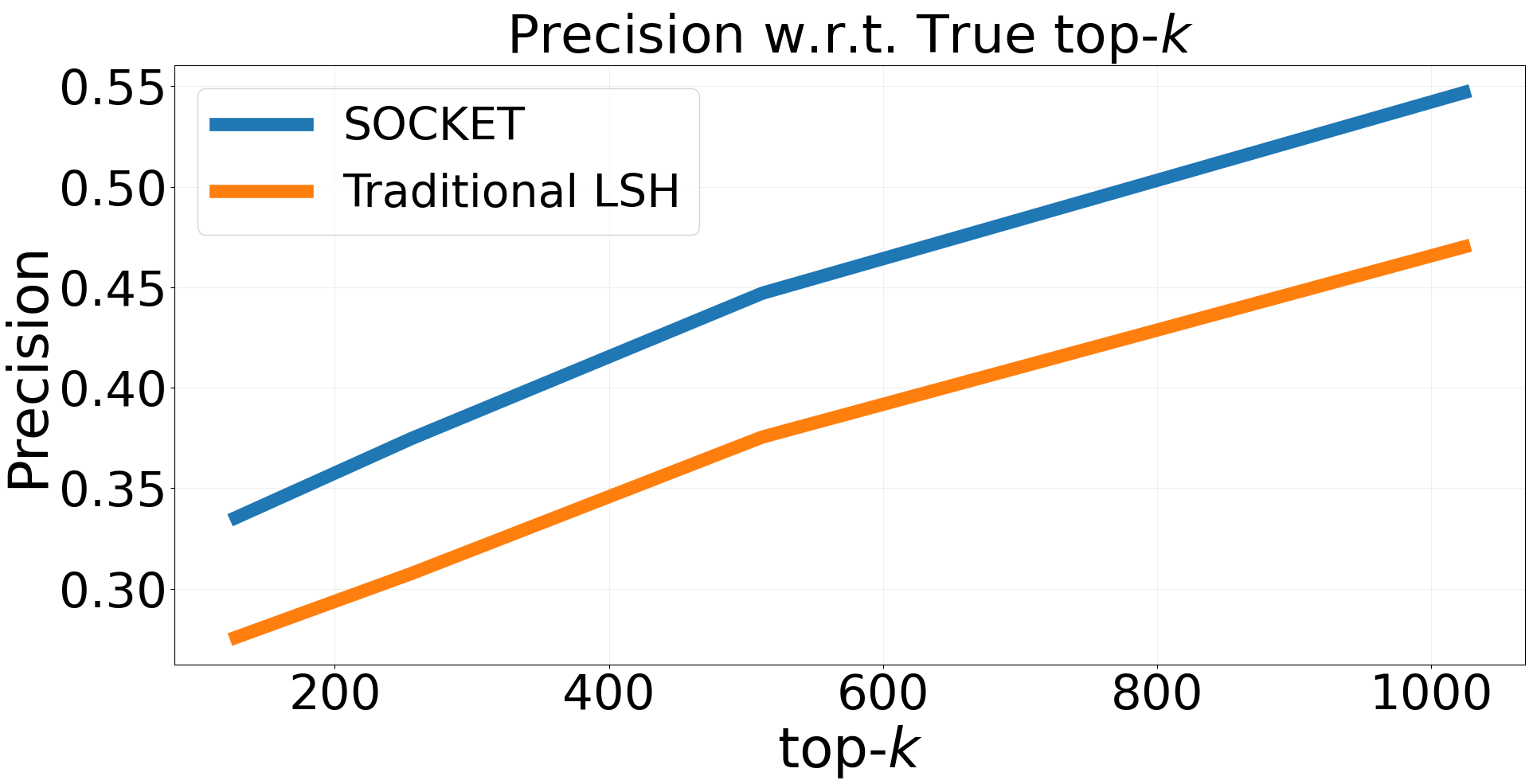}
        \caption*{Precision}
    \end{minipage}\hfill
    \begin{minipage}{0.32\linewidth}
        \centering
        \includegraphics[width=\linewidth]{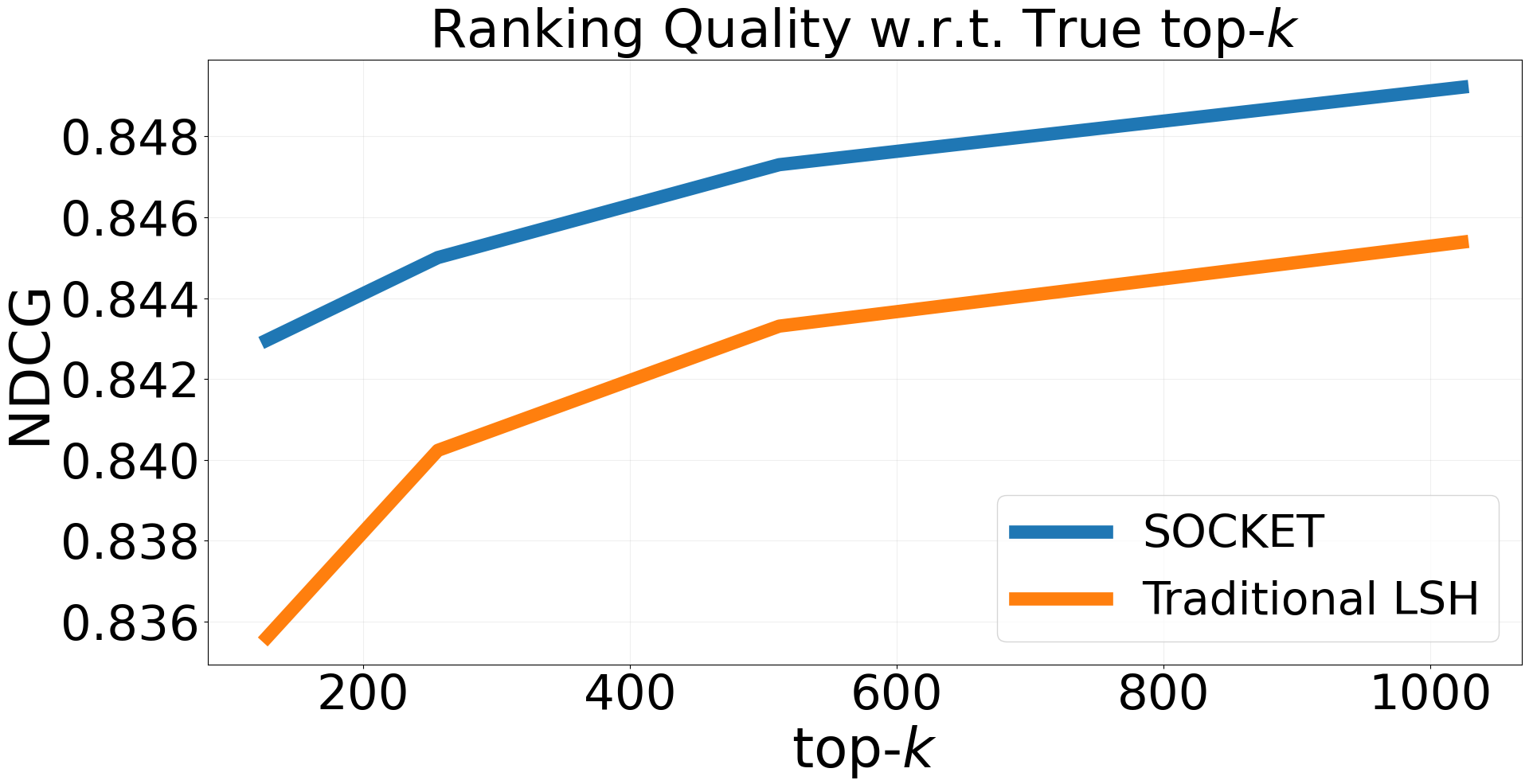}
        \caption*{NDCG}
    \end{minipage}

    \caption{Ranking quality comparison between SOCKET and traditional LSH under same memory budget (600 bits/token) as a function of top-$k$. Keys and queries are extracted from the final layer of Llama-3.1-8B on the Qasper dataset, and the ground-truth relevance is defined by the dot-product similarity between each key–query pair. Precision and Jaccard measure overlap with the ground-truth top-$k$ set, while NDCG additionally evaluates agreement with the ground-truth ranking. These metrics are well defined in the Appendix~\ref{sec:metric_def}.}
    \label{fig:ranking_metrics}
\end{figure*}
\section{Related Work}
Given the importance of long-context inference, extensive prior work accelerates attention by approximating top-$k$ selection and restricting computation to a subset of tokens per query. One line of work uses dimensionality reduction: \textsc{Double Sparsity}~\citep{yang2024double} reduces computation along the feature dimension by selecting channels via offline calibration of channel norms, while \textsc{Quest}~\citep{pmlr-v235-tang24l} reduces computation along the token dimension by selecting tokens at the page level. Although effective, these approaches rely on data-dependent heuristics and lack theoretical guarantees. In Section~\ref{sec:exp}, we show that SOCKET matches or outperforms these methods on Llama-3.1-8B-Instruct and Qwen3-8B while providing a principled, data-agnostic alternative. A second line of work casts top-$k$ attention as retrieval. \textsc{RetrievalAttention}~\citep{liu2024retrievalattention} uses graph-based nearest neighbor search but offloads selection to the CPU due to irregular computation, introducing latency. \textsc{PQCache}~\citep{pqcache2025} integrates product quantization into KV-cache management for approximate retrieval, also relying on a CPU--GPU pipeline. In contrast, we show that data-agnostic random projections significantly accelerate TTFT while maintaining or improving accuracy (fig.~\ref{fig:pq_gpu}). SOCKET further outperforms \textsc{PQCache} on Llama-3.1-8B-Instruct and Qwen3-8B, while offering a simpler design (see Section~\ref{sec:exp}).
\\
\\
\noindent
Furthermore, several approaches estimate attention outputs using hashing-based data structures. \textsc{Learning-to-Hash Attention}~\citep{sun2022sparse} trains hash functions to map keys into balanced hash tables, enabling sparse retrieval during attention computation. \textsc{HashAttention}~\citep{desai2025hashattention} encodes queries and keys into Hamming space using learned mappings to capture semantic similarity. \textsc{MagicPig}~\citep{chen2024magicpig} observes that top-$k$ attention can be biased when attention scores are relatively uniform, and addresses this issue with an importance-sampling estimator built on LSH that approximates the sampling distribution. Concretely, it uses traditional LSH to sample candidate keys and applies an importance-sampling correction to obtain an unbiased estimator of softmax attention. In contrast, \textsc{SOCKET} employs \textit{soft} LSH to compute soft collision scores, deterministically selects the top-$k$ keys, and subsequently performs exact attention over the retrieved subset. Therefore, while MagicPig is fundamentally a sampling-based estimator, \textsc{SOCKET} is a retrieval-based approach centered on accurate top-$k$ selection. While these methods highlight the potential of hashing for reducing attention cost, they primarily rely on hard or learned hash functions and do not directly address the problem of accurate key ranking. In Section~\ref{sec:exp}, we empirically demonstrate that SOCKET’s soft LSH–based scoring yields more reliable performance.
\\
\\
\noindent
Several other works address efficient long-context inference, including \textsc{vAttention}~\citep{desai2025vattention}, \textsc{Squeeze Attention}~\citep{hooper2024fast}, \textsc{Loki}~\citep{loki2024lowrank}, \textsc{InfLLM}~\citep{xiao2024infllm}, \textsc{StreamLLM}~\citep{xiao2023streamllm}, \textsc{Adamas}~\citep{zhang2024adamas}, and \textsc{H$_2$O}~\citep{zhang2023h2o}, which explore complementary system- and algorithm-level strategies to reduce attention overhead. We do not aim to provide a comprehensive empirical comparison with these methods. Instead, we focus on a principled, data-agnostic sparsification approach based on locality-sensitive hashing, which serves as a similarity-aware scoring mechanism rather than a binary filtering heuristic. Accordingly, we do not discuss these methods further.

\section{Background}
\subsection{Locality-Sensitive Hashing (LSH)}
An LSH family $\mathcal H$ for a similarity $\mathrm{Sim}$ makes near pairs collide more often than far pairs. Formally, $\mathcal H$ is $(S_0,cS_0,p_1,p_2)$-sensitive if for all $x,y\in\mathbb{R}^D$,
\[
\begin{cases}
\mathrm{Sim}(x,y)\ge S_0 \;\Rightarrow\; \Pr_{h\sim\mathcal H}[h(x)=h(y)]\ge p_1,\\[2pt]
\mathrm{Sim}(x,y)\le cS_0 \;\Rightarrow\; \Pr_{h\sim\mathcal H}[h(x)=h(y)]\le p_2,
\end{cases}
\]
% \[
% \text{where } p_1 > p_2 \text{ and } c < 1.
% \]
where $p_1 > p_2$ and $c < 1$. Such families enable sublinear-time approximate nearest-neighbor data structures. A convenient sufficient condition, satisfied by SimHash and WTA \citep{Charikar2002, ChenShrivastava2018, Yagnik2011}, is that the collision probability is a monotone function of similarity,
$\Pr_{h\sim\mathcal H}[h(x)=h(y)] = f(\mathrm{Sim}(x,y))$ with $f$ increasing.

\subsection{Sparse Attention}
For clarity, we restrict our exposition to the case of batch size one with a single query vector $\qb\in\R{d}$. Consider keys $\kb_1,\dots,\kb_N\in \mathbb{R}^d$ and corresponding values $\vb_1,\dots,\vb_N\in \mathbb{R}^d$. We modify~\eqref{eq:sdpa} to define sparse attention. Let $S$ denote the sequence of $k$ important token indices selected by a method. The sparse attention computation based on this index set is given by:
\begin{equation}
\label{eq:sparse-attn}
\mathbf{y}_{k}(\mathbf{q}) 
= \sum_{i \in S_k} \alpha_i \mathbf{v}_i\,,~~~\text{where}~ \alpha_i=\frac{\exp(\mathbf{k}_i^\top\qb)}{\sum_{j\in\Scal_k}\exp(\mathbf{k}_j^\top\qb)}~~\text{for}~i\in\mathcal{S}.
\end{equation}
% \nicefrac{\exp(\mathbf{k}_i^\top \mathbf{q})}

% \begin{figure*}[t]
%     \centering
%     \begin{minipage}{0.32\linewidth}
%         \centering
%         \includegraphics[width=\linewidth]{figs/Jaccard.png}
%         \caption*{Jaccard}
%     \end{minipage}\hfill
%     \begin{minipage}{0.32\linewidth}
%         \centering
%         \includegraphics[width=\linewidth]{figs/Precision.png}
%         \caption*{Precision}
%     \end{minipage}\hfill
%     \begin{minipage}{0.32\linewidth}
%         \centering
%         \includegraphics[width=\linewidth]{figs/NDCG.png}
%         \caption*{NDCG}
%     \end{minipage}

%     \caption{Ranking quality comparison between SOCKET and traditional LSH under same memory budget (600 bits/token) as a function of top-$k$. Keys and queries are extracted from the final layer of Llama-3.1-8B on the Qasper dataset, and the ground-truth relevance is defined by the dot-product similarity between each key–query pair. Precision and Jaccard measure overlap with the ground-truth top-$k$ set, while NDCG additionally evaluates agreement with the ground-truth ranking. These metrics are well defined in the Appendix~\ref{sec:metric_def}.}
%     \label{fig:ranking_metrics}
% \end{figure*}

\section{Introducing SOCKET}

In dense attention, a query vector $\qb$ interacts with all $N$ keys to produce
attention weights, which are then applied to the corresponding values to generate
the output embedding. Prior work has shown that, for token generation, a small
subset of keys often dominates the attention mass~\citep{ge2024model, zhang2023h2o}.
We leverage this observation by using soft LSH as a sparsification mechanism,
treating it as a scoring function to identify the most relevant keys.
\\
\noindent
In traditional LSH-based retrieval, each key $\kb_j$ is projected into $L$
independent hash tables. For each table $\ell$, the key is assigned to a discrete
bucket {$b_j^{(\ell)}$}. At inference time, traditional LSH hashes the query $\qb$
in the same manner and scores a key by counting the number of hash tables in which
it collides with the query. We denote this hard collision score by
$s_{\mathrm{hard}}(\kb_j,\qb)$. In contrast, SOCKET replaces the hard query assignment
with a soft assignment: while keys are still assigned to a single bucket per hash
table, the query distributes probability mass across all buckets. We denote the
resulting SOCKET soft collision score by $s_{\mathrm{soft}}(\kb_j,\qb)$. 
These two scores are defined as
\begin{equation}
\label{eq:hard-soft-scores}
s_{\mathrm{hard}}(\kb_j,\qb)
=
\sum_{\ell=1}^{L}
\mathbb{I}\!\left[b_j^{(\ell)}=b_{\qb}^{(\ell)}\right],
\qquad
s_{\mathrm{soft}}(\kb_j,\qb)
=
\sum_{\ell=1}^{L}
p_\tau^{(\ell)}\!\left(b_j^{(\ell)}\mid \qb\right).
\end{equation}
SOCKET assigns higher scores to keys whose hash buckets receive greater probability mass under the query’s soft hash. We use these scores to select the top-$k$ keys, on which attention is subsequently computed. A clean schematic that distinguishes traditional LSH and soft LSH as a stable ranker can be seen in fig.~\ref{fig:soft-vs-hard}. 

\subsection{The Final Algorithm}
\begin{figure}[t]
\centering

\begin{minipage}[t]{0.49\linewidth}
\vspace{0pt}
\footnotesize
\begin{algorithm}[H]
  \caption{ PrecomputeKeyHashes (prefill)}
  \label{alg:race-precompute}
  \begin{algorithmic}[1]

    \State \textbf{Input:} $\{\kb_j, \vb_j\}_{j=1}^N$, \#planes $P$, \#tables $L$
    \State $R \gets 2^P$
    \For{$\ell=1$ to $L$}
      \State Sample $\Wb^{(\ell)} \sim \mathcal{N}(0,1)$
      \For{$j=1$ to $N$}
        \State $h^{(\ell)}(\kb_j)\gets \mathrm{sign}(\Wb^{(\ell)}\kb_j)$
        % \State $b_j^{(\ell)} \gets \mathrm{assign}(h^{(\ell)}(\kb_j))$
        \State Encode $h^{(\ell)}(\kb_j)$ as BucketId $b_j^{(\ell)}$
      \EndFor
    \EndFor
    \State \textbf{return} $\{||\vb_j||_2\}, \{\Wb^{(\ell)}\}, \{b_j^{(\ell)}\}$
    % \State \textbf{return} $\|\vb_j\|_2, \Wb^{(\ell)}, b_i^{(\ell)}$ for $i\in [R], j\in[N],\ell\in[L]$
  \end{algorithmic}
\end{algorithm}
\end{minipage}
\hfill
\begin{minipage}[t]{0.49\linewidth}
\vspace{0pt}
\centering
\scriptsize
\setlength{\tabcolsep}{4pt}

\captionof{table}{Correlation and variance comparison on \textsc{Samsum} and \textsc{Qasper}.}
\vspace{2pt}

\begin{tabular}{cc cc cc cc}
\toprule
& & \multicolumn{2}{c}{\textsc{Samsum}} & \multicolumn{2}{c}{\textsc{Qasper}} \\
\cmidrule(r){3-4}\cmidrule(l){5-6}
P & L & Corr & Var & Corr & Var \\
\midrule
\multicolumn{6}{c}{\textbf{SOCKET} ($\tau=0.5$)} \\
\midrule
10 & 20 & 0.504 & $8.9{\times}10^{-9}$ & 0.483 & $7.7{\times}10^{-9}$ \\
10 & 40 & 0.625 & $4.4{\times}10^{-9}$ & 0.606 & $4.6{\times}10^{-9}$ \\
10 & 60 & 0.690 & $3.03{\times}10^{-9}$ & 0.673 & $2.7{\times}10^{-9}$ \\
\midrule
\multicolumn{6}{c}{\textbf{Hard LSH}} \\
\midrule
2 & 250 & 0.573 & $5{\times}10^{-4}$ & 0.550 & $5{\times}10^{-4}$ \\
2 & 300 & 0.605 & $4{\times}10^{-4}$ & 0.583 & $4{\times}10^{-4}$ \\
2 & 350 & 0.633 & $3{\times}10^{-4}$ & 0.610 & $3{\times}10^{-4}$ \\
\bottomrule
\end{tabular}

\label{tab:corr_variance_combined}
\end{minipage}

\end{figure}

\begin{figure}[t]
\centering

% -------- LEFT: TABLE --------
\begin{minipage}[t]{0.49\linewidth}
\vspace{0pt}
\footnotesize
\begin{algorithm}[H]
\caption{SoftBucketProbs (decoding)}
\label{alg:soft-bucket-probs}
\begin{algorithmic}[1]
\State \textbf{Input:} $\qb$, $\{\Wb^{(\ell)}\}$, 
$\{\mathbf{c}_r\}_{r=1}^{R}\subset\{\pm1\}^P$,
% $\{\mathbf{c}_r\}$, 
$\tau$
\For{$\ell=1$ to $L$}
  \State $\mathbf{u}^{(\ell)}(\qb) \gets \tfrac{1}{\sqrt{d}}\tanh(\Wb^{(\ell)}\qb)\in\mathbb{R}^P$
  \For{$r=1$ to $R$}
    \State $\text{logit}^{(\ell)}(r) \gets \frac{\mathbf{u}^{(\ell)}(\qb)^\top \mathbf{c}_r}{\tau}$
  \EndFor
  \State $p_{\tau}^{(\ell)}(\cdot\mid \qb)\gets \mathrm{softmax}(\text{logits})$
\EndFor
\State \textbf{return} $p_{\tau}^{(\ell)}(r\mid \qb)$ for all $r\in[R]$ and $\ell\in [L]$.
\end{algorithmic}
\end{algorithm}
\end{minipage}
\hfill
% -------- RIGHT: ALGORITHM --------
\begin{minipage}[t]{0.45\linewidth}
\vspace{0pt}
\footnotesize

\begin{algorithm}[H]
\caption{ValueAwareTop-$k$ Attention}
\label{alg:value-aware-topk}
\begin{algorithmic}[1]
\State \textbf{Input:} $\qb$, $p_{\tau}^{(\ell)}$, $b_j^{(\ell)}$, values $\vb_j$, $L$, $k$

\For {$j=1$ to $N$}
  \State $\widehat w_j \gets \sum_{\ell=1}^L
  p_{\tau}^{(\ell)}(b_j^{(\ell)}\mid \qb)$
\EndFor

\vspace{1mm}
\State $\mathcal{S}_k \gets \operatorname{TopK}(\widehat w_1\|\vb_1\|_2,\dots,\widehat{w}_N\|\vb_N\|_2)$

% \For {$j \in \mathcal{S}_k$}
%   \State $\alpha_j \gets
%   \nicefrac{\exp(\kb_j^\top \qb)}
%   {\sum_{i\in\mathcal{S}_k}\exp(\kb_i^\top \qb)}$
% \EndFor
\State $\alpha_j \gets
\frac{\exp(\kb_j^\top \qb)}{\sum_{i\in\mathcal{S}_k}\exp(\kb_i^\top \qb)}$ \quad
for all $j\in\mathcal{S}_k$
\vspace{2mm}
% \State $\alpha_j \gets
% \frac{\exp(\widehat w_j)}{\sum_{i\in\mathcal{S}_k}\exp(\widehat w_i)}$ \quad
% for all $j\in\mathcal{S}_k$

% \vspace{1mm}
\State \textbf{return} $\mathbf{y}_{k}(\mathbf{q}) 
= \sum_{j\in\mathcal{S}_k} \alpha_j \vb_j$
\end{algorithmic}
\end{algorithm}

\end{minipage}

\end{figure}

Our method consists of three stages: 
\\
\\
\noindent
\textbf{Algorithm~\ref{alg:race-precompute} (Keys hashing):} During the prefill phase, each key vector is hashed into $L$ independent hash tables. Specifically, we apply $L$ random projections and assign each key to a single bucket per hash table. These bucket assignments are computed once and cached in GPU memory for reuse during decoding, together with the corresponding value vector norms. \\
\textbf{Algorithm~\ref{alg:soft-bucket-probs} (Query soft hashing):} At decoding time, a new query vector is softly hashed into the same $L$ hash tables. Instead of a single bucket assignment, the query induces a probability distribution over buckets in each table. Using these distributions, we compute a score for every key by aggregating the probability mass assigned to the key’s corresponding buckets, as defined in \eqref{eq:hard-soft-scores}. \\
\textbf{Algorithm~\ref{alg:value-aware-topk} (Top-$k$ selection):} Finally, we select the top-$k$ keys according to their soft LSH scores and compute attention using only this subset. This yields a sparse yet high-quality approximation of dense attention. We discuss approximation quality in the next section.

\section{Theoretical Insights}
In this section, we present the theoretical analysis of our method.
We characterize the quality of the similarity scores produced by soft LSH and study a
sampling-based estimator to establish concentration and approximation guarantees with
respect to cosine similarity–based (angular) attention.
We adopt angular attention as a modeling choice, as it provides an interpretable and analytically
tractable surrogate for the attention scoring function. Concretely, for a fixed query $\qb$ and keys $\{\kb_j\}_{j=1}^N$, we define the
angular kernel weights
\begin{equation}
w_j
\;:=\;
\left(
1 - 
\frac{1}{\pi}\cos^{-1}\!\left(
\frac{\qb^\top \kb_j}{\|\qb\|\,\|\kb_j\|}
\right)
\right)^P
\in [0,1],
\label{eq:angular-kernel}
\end{equation}
where $P$ controls the sharpness of the kernel.
Normalizing these weights yields the angular attention distribution 
and the corresponding angular attention output is given by
\[
a_j:=\frac{w_j}{Z},
\qquad
Z := \sum_{i=1}^N w_i,
\qquad
\yb^*
:=
\sum_{j=1}^N a_j \vb_j.
\]
% and the corresponding angular attention output
% \begin{equation}
% \yb^*
% \;:=\;
% \sum_{j=1}^N a_j \vb_j.
% \label{eq:angular-attention-output}
% \end{equation}
%%%%%%%%%%%%%%%%%%%%%%%%%%%%%%%%%%%%%%%%%%%%%%%%%%%
Our analysis targets an approximation of $\yb^*$. Prior work shows that cosine-similarity kernels amenable to LSH closely approximate softmax attention in practice~\cite{joshi2026race}, so we analyze this surrogate. Sampling is used only for analysis, enabling unbiased estimators and high-probability error bounds; the inference procedures in Algorithms~\ref{alg:race-precompute} and~\ref{alg:soft-bucket-probs} remain unchanged, with only the final aggregation replaced by a sampling-based estimator.  In practice (Section~\ref{sec:exp}), Algorithm~\ref{alg:value-aware-topk} performs deterministic top-$k$ selection using the same soft LSH scores, followed by standard softmax over the selected subset. Thus, the implementation preserves the pre-trained attention mechanism after retrieval, while the theory analyzes normalized soft LSH scores as a proxy attention distribution to study the SOCKET scoring kernel. We leave empirical evaluation of the sampling approach for future work.

\subsection{Sampling-based Estimator}
Our sampling-based estimator is constructed from the same scores produced in
Algorithm~\ref{alg:value-aware-topk}, namely the soft-LSH weights
$\widehat w_1,\dots,\widehat w_N$.
For the purpose of theoretical analysis, we rescale these scores
\emph{by the number of hash tables} $L$ to obtain {$\widetilde w_j
\;:=\;
\frac{1}{L}\sum_{\ell=1}^L
p_{\tau}^{(\ell)}\!\big(b_j^{(\ell)} \mid \qb\big)
=\
\frac{1}{L}\,\widehat w_j$ for $j=1,\dots,N$}.
To interpret the rescaled scores as an attention distribution, we further normalize $\widetilde a_j
:=
\nicefrac{\widetilde w_j}{\widetilde Z}$ with $\widetilde Z
:=
\sum_{j=1}^N \widetilde w_j$ for $j=1,\dots,N$.
The normalized weights \(\widetilde a_j\) form a probability distribution over keys and serve
as our proxy for angular attention coefficients in the theoretical analysis and the resulting attention output
\begin{equation}
\yb_{\tau,L}(\qb)
\;=\;
\sum_{j=1}^N \widetilde{a}_j \vb_j.
\end{equation}
Note that the randomness in $\yb_{\tau,L}(\qb)$ is due to the random hyperplanes for each hash table that we introduced. Consequently, we define the population (single-table) soft-count weight $w_{\tau, j}:=\mathbb{E}\big(s_j^{(1)}(\qb)\big)$, the denominator $Z_\tau:=\sum_{j=1}^N w_{\tau, j}$, and the output $\yb_\tau(\qb):=\sum_{j=1}^N a_{\tau, j} \vb_j$, where $a_{\tau, j}=w_{\tau, j}/Z_\tau$.
Now, we define a sampling distribution $p_j \;\propto\; \widetilde a_j\|\vb_j\|_2$,
and consider an estimator formed by drawing $M$ independent samples
$J_1,\dots,J_M \sim p:=(p_1,\dots,p_N)$ and aggregating
\begin{flalign}
\mathbf{T}(\qb)
:=
\frac{1}{M}\sum_{m=1}^M
\frac{\widetilde a_{J_m}}{p_{J_m}}\,\vb_{J_m}.\label{eq:est}
\end{flalign}
In the following section, we analyze $\mathbf{T}(\mathbf{q})$ using concentration arguments from Randomized Numerical Linear Algebra (RandNLA) literature\,\cite{intro2015tropp,sketching2014woodruff} 
% Specifically, 
and bound its deviation from angular attention.

\subsection{Theoretical Analysis}\label{sec:assumptions}

\begin{assumption}\label{as1}
 $Z\ge Z_{\text{min}}>0$ and $Z_\tau\ge Z_{\tau,\text{min}}>0$.   
\end{assumption}

\vspace{-5mm}
\begin{assumption}\label{as2}
For each table $\ell\in[L]$ and bucket $r\in[R]$, the number of elements are bounded \emph{i.e.,} $$\#\left\{j\in[N]: b_j^{(\ell)}=r\right\}\le B.$$    
\end{assumption}

\vspace{-5mm}
\begin{theorem}
\label{thm:main}
Let $\qb\in\mathbb{R}^d$ be a fixed query, with keys
$\kb_1,\dots ,\kb_N\in\mathbb{R}^d$ and values
$\vb_1,\dots,\vb_N\in\mathbb{R}^d$. 
Under Assumptions~\eqref{as1} and~\eqref{as2}, and for parameters $L$, $M$, and $\tau$
with
$L\ge\frac{2B^2\log(8/\delta)}{Z_{\tau,\min}^2}$,
the estimator $\mathbf{T}(\qb)$ in eq.~\eqref{eq:est}
satisfies
{
\begin{flalign}
\|\mathbf{T}(\qb)-\yb^*(\qb)\|_2
% =\mathcal{O}\!\left(
% \sqrt{\frac{\log(1/\delta)}{M}}
% +\sqrt{\frac{\log(1/\delta)}{L}}
% +\varepsilon_\tau(\qb)
% \right)\|\mathbf{V}\|_2,
=\widetilde{\mathcal{O}}\!\left(
\frac{1}{\sqrt{L}}
+\frac{1}{\sqrt{M}}
+\varepsilon_\tau(\qb)
\right)\|\mathbf{V}\|_2,
\nonumber
\end{flalign}
}
with probability at least $1-\delta$, where $\yb^*(\qb)$ denotes the target (angular) attention output.
Here $\widetilde{\mathcal{O}}(\cdot)$ hides a $\sqrt{\log(1/\delta)}$ factor and absolute constants depending only on
$(B, Z_{\min}, Z_{\tau,\min})$, and is independent of $N,d,L,$ and $M$.
Moreover, $\varepsilon_\tau(\qb)$ quantifies the bias introduced by soft bucketization; it depends on
$\tau$ and $P$ (equivalently $R=2^P$) and satisfies $\varepsilon_\tau(\qb)\to 0$ as $\tau\to 0$ for fixed $P$.
\end{theorem}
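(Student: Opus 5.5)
We split the error by the triangle inequality into three pieces:
\[
\|\mathbf{T}(\qb)-\yb^*\|_2\le \|\mathbf{T}(\qb)-\yb_{\tau,L}(\qb)\|_2+\|\yb_{\tau,L}(\qb)-\yb_\tau(\qb)\|_2+\|\yb_\tau(\qb)-\yb^*\|_2 .
\]
These are, in order, the sampling error, the hash-table (finite-$L$) error, and the soft-bucketization bias $\varepsilon_\tau(\qb)$. We bound each piece on an event of probability at least $1-\delta/2$ or so, then take a union bound. This accounts for the constant $8/\delta$ appearing in the condition on $L$.

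\textbf{Sampling term.} We work conditionally on the hash functions. The summands $X_m=\frac{\widetilde a_{J_m}}{p_{J_m}}\vb_{J_m}$ are i.i.d. with mean $\yb_{\tau,L}(\qb)$. Since $p_j=\widetilde a_j\|\vb_j\|_2/\sum_i\widetilde a_i\|\vb_i\|_2$, each summand has the form $X_m=\big(\sum_i\widetilde a_i\|\vb_i\|_2\big)\vb_{J_m}/\|\vb_{J_m}\|_2$. Hence
\[
\|X_m\|_2\le \sum_i\widetilde a_i\|\vb_i\|_2\le \max_i\|\vb_i\|_2\le\|\mathbf{V}\|_2,
\]
and the second moment is bounded by the same quantity squared. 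A vector Bernstein inequality (or matrix Bernstein applied to the $d\times1$ dilation, as in \cite{intro2015tropp}) then gives
\[
\|\mathbf{T}-\yb_{\tau,L}\|_2=O\!\left(\sqrt{\log(1/\delta)/M}\right)\|\mathbf{V}\|_2
\]
with probability $1-\delta/4$.

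\textbf{Hash-table term.} Define $s_j^{(\ell)}:=p_\tau^{(\ell)}(b_j^{(\ell)}\mid\qb)\in[0,1]$. These are i.i.d. across $\ell$ with mean $w_{\tau,j}$, so $\widetilde w_j$ averages $L$ independent bounded terms. We write
\[
\yb_{\tau,L}-\yb_\tau=\frac{\sum_j(\widetilde w_j-w_{\tau,j})\vb_j}{\widetilde Z}+\Big(\frac{1}{\widetilde Z}-\frac1{Z_\tau}\Big)\sum_j w_{\tau,j}\vb_j .
\]

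The main obstacle is controlling the random denominator $\widetilde Z$ without incurring a factor of $N$. Here Assumption~\ref{as2} is the key. For each table, $\sum_j s_j^{(\ell)}=\sum_r p^{(\ell)}_\tau(r\mid\qb)\cdot\#\{j:b_j^{(\ell)}=r\}\le B$, because the query probabilities sum to one. So $Z^{(\ell)}:=\sum_j s_j^{(\ell)}\in[0,B]$. Hoeffding then gives $|\widetilde Z-Z_\tau|\le B\sqrt{\log(8/\delta)/(2L)}$, and the hypothesis $L\ge 2B^2\log(8/\delta)/Z_{\tau,\min}^2$ makes this at most $Z_{\tau,\min}/2$, so $\widetilde Z\ge Z_{\tau,\min}/2$.

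For the numerator, the vectors $\mathbf{u}^{(\ell)}=\sum_j s_j^{(\ell)}\vb_j$ satisfy $\|\mathbf{u}^{(\ell)}\|_2\le B\max_j\|\vb_j\|_2\le B\|\mathbf{V}\|_2$. Vector Hoeffding/Bernstein gives deviation $O\big(B\sqrt{\log(1/\delta)/L}\big)\|\mathbf{V}\|_2$. Combining with the denominator bound, and using $\|\sum_j w_{\tau,j}\vb_j\|_2\le Z_\tau\max_j\|\vb_j\|_2$, yields $O\big(\tfrac{B}{Z_{\tau,\min}}\sqrt{\log(1/\delta)/L}\big)\|\mathbf{V}\|_2$.

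\textbf{Bias term.} This term is deterministic. The same normalized-difference decomposition gives
\[
\|\yb_\tau-\yb^*\|_2\le \frac{2}{Z_{\min}}\sum_j|w_{\tau,j}-w_j|\,\|\vb_j\|_2 ,
\]
up to handling $Z_\tau$ versus $Z$ via Assumption~\ref{as1}. We define $\varepsilon_\tau(\qb)$ as (a normalized form of) $\sum_j|w_{\tau,j}-w_j|$. To show $\varepsilon_\tau\to0$, we note that $\tanh$ preserves signs and $\mathbf{c}_r\in\{\pm1\}^P$, so as $\tau\to0$ the softmax over logits $\mathbf{u}^\top\mathbf{c}_r/\tau$ concentrates on the bucket $r$ with $\mathbf{c}_r=\mathrm{sign}(\Wb^{(\ell)}\qb)$, whenever no coordinate of $\Wb\qb$ is zero, which holds almost surely. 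Therefore $s_j^{(1)}\to\mathbb{I}[b_j=b_\qb]$ almost surely. Dominated convergence (the scores lie in $[0,1]$) gives
\[
w_{\tau,j}\to\Pr[\text{all }P\text{ signs agree}]=\big(1-\theta_j/\pi\big)^P=w_j
\]
by SimHash independence across the $P$ rows. Hence $\varepsilon_\tau\to0$ for fixed $P$.

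Collecting the three bounds with the union bound gives the claimed rate. The constants depend only on $B$, $Z_{\min}$, and $Z_{\tau,\min}$.
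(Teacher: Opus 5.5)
\textbf{Verdict.} The sampling and finite-$L$ steps are sound, but the bias term has a genuine gap.

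\textbf{Sampling and finite-$L$ terms.} These track Lemmas~\ref{lem:sampling} and~\ref{lem:zhat}: you condition on the tables, use $\|\mathbf{X}_m\|_2=S_1\le\|\mathbf{V}\|_2$, apply Hoeffding to $\widetilde Z$ via $Z^{(\ell)}\in[0,B]$ (which is how the paper avoids a factor of $N$), and use vector concentration for the numerator. Your bound $\|\sum_j w_{\tau,j}\vb_j\|_2\le Z_\tau\max_j\|\vb_j\|_2$ gives a different but equally valid constant. One caution: matrix Bernstein on the $d\times 1$ dilation carries a $(d+1)$ prefactor, i.e.\ a $\log d$ in the rate. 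That contradicts the claimed independence of $d$. Use a dimension-free vector inequality (Pinelis), as the paper does, or the intrinsic-dimension form of matrix Bernstein.

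\textbf{Bias term.} After reducing to $\|\yb_\tau-\yb^*\|_2\lesssim \frac{\|\mathbf{V}\|_2}{Z_{\min}}\|\wb_\tau-\wb\|_1$, you simply \emph{define} $\varepsilon_\tau(\qb)$ as (a normalization of) $\sum_j|w_{\tau,j}-w_j|$. This makes the third term tautological. It also yields a quantity that depends on all $N$ keys. The statement's $\varepsilon_\tau(\qb)$ depends only on $\tau$ and $P$ for fixed $\qb$: it is the non-dominant mass $\varepsilon_\tau(\qb)=\EE[1-p_\tau^{(\ell)}(b_q\mid\qb)]$.

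You never show that $\|\wb_\tau-\wb\|_1$ is controlled by this key-independent quantity. Without Assumption~\ref{as2} it can be of order $N\varepsilon_\tau(\qb)$, e.g.\ when many keys point along $\qb$. So this is exactly where Assumption~\ref{as2} must enter, yet your bias step never uses it.

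The missing ingredient is the per-table occupancy bound of Lemma~\ref{lem:disc}. For each fixed $\Wb^{(\ell)}$, with $I_j=\mathbf{1}\{b_j^{(\ell)}=b_q\}$,
\[
\sum_{j=1}^N |s_j^{(\ell)}(\qb)-I_j| = \#\{j: b_j^{(\ell)}=b_q\}\bigl(1-p_\tau^{(\ell)}(b_q\mid\qb)\bigr) + \sum_{r\neq b_q}\#\{j: b_j^{(\ell)}=r\}\,p_\tau^{(\ell)}(r\mid\qb) \le 2B\bigl(1-p_\tau^{(\ell)}(b_q\mid\qb)\bigr).
\]
Take expectations, using $w_j=\EE[I_j]$ from the SimHash identity you already invoke. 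This gives $\|\wb_\tau-\wb\|_1\le 2B\,\varepsilon_\tau(\qb)$ and hence $|Z-Z_\tau|\le 2B\,\varepsilon_\tau(\qb)$. The third term then closes with constants depending only on $(B,Z_{\min},Z_{\tau,\min})$.

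Your $\tau\to0$ argument transfers directly: $p_\tau^{(\ell)}(b_q\mid\qb)\to1$ almost surely, so bounded convergence gives $\varepsilon_\tau(\qb)\to0$.
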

\noindent
Theorem~\ref{thm:main} provides an end-to-end error decomposition for our proposed
soft-count attention estimator $\Tb(\qb)$.
The bound separates the total error into three terms corresponding to
sampling variance, finite-table approximation error, and a bias induced by
soft bucketization given by $\varepsilon_\tau=\EE[1-p_\tau^{(\ell)}(b_q\mid \qb)]$.
This decomposition makes explicit how algorithmic parameters control different
sources of error. 
% See Section~\ref{thm:proof} for additional discussion.
Due to space constraints, we defer the proof of Theorem~\ref{thm:main} and further discussion of the assumptions, parameter roles ($L$, $M$ and $\tau$), and the analysis of $\yb_{\tau,L}(\qb)$ to Appendix~\ref{thm:proof}.

\begin{figure*}[htbp]
    \centering
    \begin{subfigure}{0.32\linewidth}
        \centering
        \includegraphics[width=\linewidth]{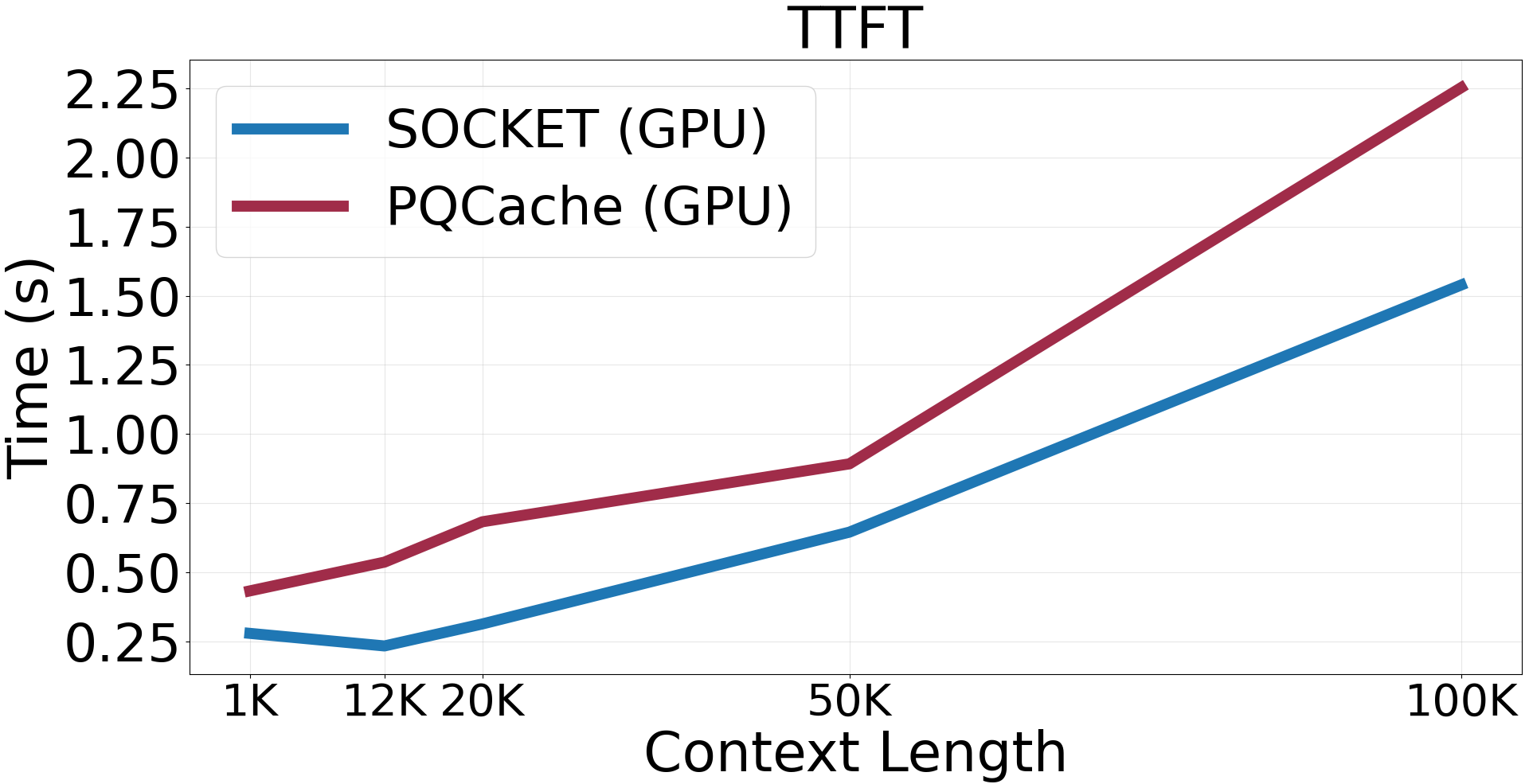}
        \caption{TTFT comparison}
        \label{fig:pq_gpu}
    \end{subfigure}
    \hfill
    \begin{subfigure}{0.32\linewidth}
        \centering
        \includegraphics[width=\linewidth]{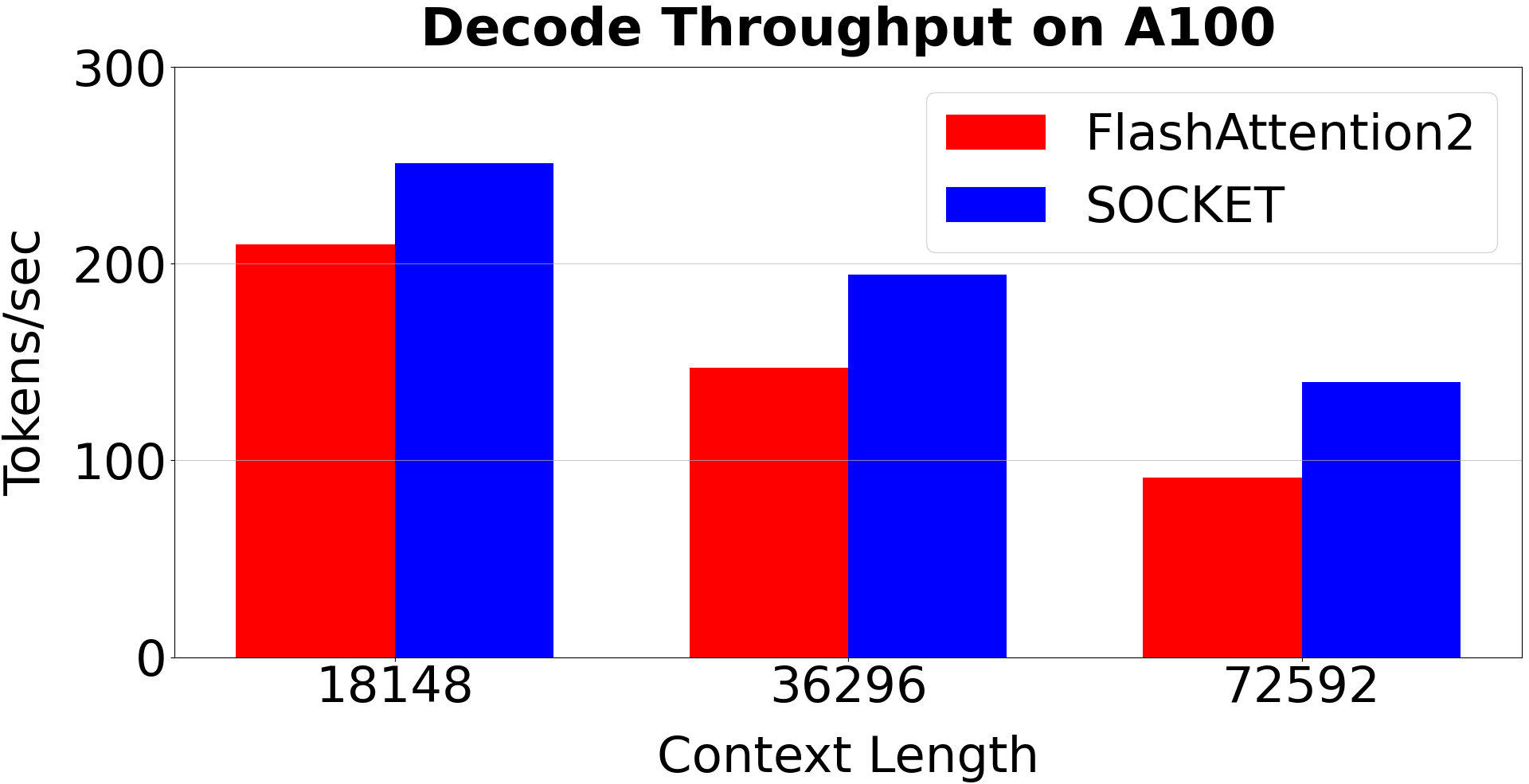}
        \caption{Throughput increase on A100}
        \label{fig:decode_a100}
    \end{subfigure}
    \hfill
    \begin{subfigure}{0.32\linewidth}
        \centering
        \includegraphics[width=\linewidth]{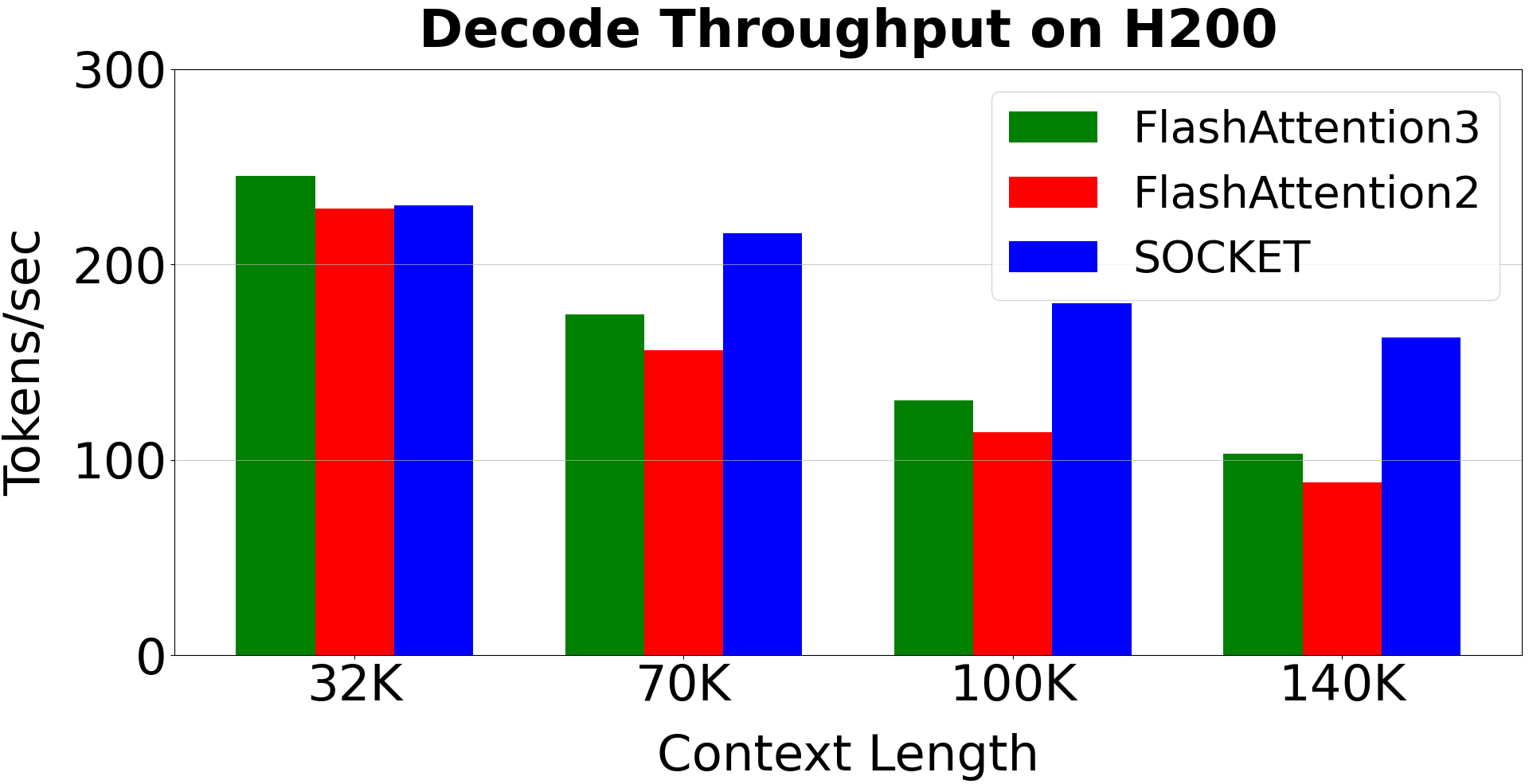}
        \caption{Throughput increase on H200}
        \label{fig:decode_h200}
    \end{subfigure}
    \caption{(a) Time-to-first-token (TTFT) comparison on H100 for SOCKET and PQCache indexers. (b–c) Decode-only throughput as a function of context length for SOCKET (33$\times$ sparsity) and FlashAttention using GPT-FAST (Llama-3.1-8B).}
    \label{fig:overall}
\end{figure*}

\begin{table*}[ht]
\centering
\caption{Comparison of dense and sparse attention methods on LongBench (Llama-3.1-8B-Instruct)}
\label{tab:lama31-long}
\resizebox{\textwidth}{!}{
\begin{tabular}{llrrrrrrrrrrrrrrrr}
\toprule
Method & Sparsity & NQA & QAS & MFQA & HPQA & WIKI & MUS & GOV & QMSUM & MNews & LCC & Trivia & SamSUM & Count & Retrieval & Repo & \textit{AVG} \\
\midrule
Baseline & Dense 
& 31.05 & 44.67 & 55.97 & 55.40 & 55.13 & 29.41 & 34.77 & 25.14 & 26.90 & 59.8 & 91.16 & 43.24 & 10.0 & 99.0 & 53.92 & 50.3 \\

% \midrule
% PQcache & 5$\times$ 
% & 32.78 & 45.04 & 54.69 & 54.05 & 48.63 & 24.35 & 35.28 & 25.03 & 27.02 & 58.12 & 86.15 & 22.11 & 7.85 & 100 & 41.22 & 46.7 \\

% Quest & 5$\times$ 
% & 33.49 & 45.36 & 50.10 & 54.20 & 45.16 & 26.56 & 34.16 & 24.13 & 27.04 & 62.85 & 77.51 & 32.36 & 8.63 & 100 & 51.74 & 47.4 \\

% SOCKET & 5$\times$  
% & 31 & \textbf{46.04} & 54.30 & \textbf{54.97} & \textbf{50.74} & \textbf{29.7} & \textbf{35.77} & 24.90 & \textbf{27.57} & 60.94 & \textbf{87.74} & \textbf{33.76} & \textbf{9.03} & \textbf{100} & 49.7 & \textbf{49} \\

\midrule
PQcache & 10$\times$  
& 30.88 & 44.27 & 53.64 & 52.61 & 48.01 & 26.14 & 34.92 & 24.57 & 27.17 & 59.81 & 81.89 & 14.87 & 8.67 & 100 & 44.75 & 45.9 \\

Quest & 10$\times$  
& 32.21 & 46.20 & 52.31 & 54.79 & 45.98 & 25.36 & 34.95 & 24.28 & 27.58 & 57.96 & 69.28 & 31.43 & 6.02 & 99.43 & 53.8 & 46.8 \\

SOCKET & 10$\times$  
& 31.58 & \textbf{46.7} & 54.54 & 54.72 & \textbf{48.48} & \textbf{27.79} & \textbf{35.41} & \textbf{25.04} & 27.39 & \textbf{60.8} & \textbf{86.96} & \textbf{35.7} & 7.98 & \textbf{100} & 48.8 & \textbf{48.8} \\

\midrule
PQcache & 33$\times$  
& 26.09 & 43.7 &  53.75 & 45.79 & 46.21 & 21.6 & 35.39 & 23.58 & 27.25 & 59.43 & 74.91 & 22.6 & 5.38 & 100 & 45.34 & 44.68 \\

Quest & 33$\times$  
& 28.67 & 39.56 & 51.47 & 55.23 & 47.85 & 23.27 & 33.16 & 23.93 & 25.16 & 59.35 & 80.69 & 37.34 & 3.08 & 98 & 54.2 & 46.99 \\

SOCKET & 33$\times$  
& \textbf{29.34} & \textbf{48.19} &  53.03 &  51.88 & 46.55  & \textbf{27.46} & 34.72 & \textbf{24.25} & 27.21 & \textbf{62.95} & 78.35 & \textbf{38.74} & 4 & 99 & 48.08 & \textbf{47.83}\\

\bottomrule
\end{tabular}
}
\end{table*}

\begin{table*}[ht]
\centering
\caption{Comparison of dense and sparse attention methods on LongBench (Qwen3-8B)}
\label{tab:qwen-long}
\resizebox{\textwidth}{!}{
\begin{tabular}{llrrrrrrrrrrrrrrrr}
\toprule
Method & Sparsity & NQA & QAS & MFQA & HPQA & WIKI & MUS & GOV & QMSUM & MNews & LCC & Trivia & SamSUM & Count & Retrieval & Repo & \textit{AVG}\\
\midrule
Baseline & Dense 
& 13.16 & 26.13 & 33.98 & 33.23 & 22.43 & 19.13 & 32.11 & 22.87 & 24.79 & 13.98 & 89.16 & 42.69 & 7& 100 & 8.76 & 34.46\\

% \midrule
% PQcache & 5$\times$
% & 13.61 & 25.87 & 34.15 & 33.48 & 20.68 & 17.97 & 31.83 & 22.53 & 25.34 & 15.2 & 88.29 & 42.97 & 7 & 100 & 8.76 & 36.3\\

% Quest & 5$\times$
% & 14.25 & 28 & 35.77 & 33.18 & 23.04 & 17.39 & 33.3 & 23.4 & 26.18 & 12.04 & 82.65 & 41.48 & 7 & 98.67 & 11.54 & 36.1 \\
    
% SOCKET & 5$\times$ 
% & \textbf{15.39} & \textbf{26.31} & 34.62 & \textbf{34.88} & 22.43 & \textbf{19.06} & 31.98 & 22.6 & 25.21 & \textbf{16.55} & \textbf{88.82} & \textbf{43.99} & 9 & \textbf{100} & 11.55 & \textbf{37}\\

\bottomrule
PQcache & 10$\times$ 
& 13.99 & 25.13 & 33.89 & 32.42 & 20.5 & 16.69 & 32.3 & 22.66 & 24.9 & 14.82 & 86.96 & 43.29 & 8 & 100 & 10.09 & 34.11 \\

Quest & 10$\times$  
& 13.51 & 24.95 & 36.49 & 32.96 & 22.1 & 16.38 & 32.97 & 22.18 & 25.69 & 11.61& 73.35 & 39.38 & 6 & 98.17 & 10.92 & 32.9\\

SOCKET & 10$\times$  
& \textbf{14.95} & \textbf{26.48} & 34.83 & \textbf{35.12} & 21.42 & \textbf{18.96} & 32.86 & 22.52 & \textbf{25.78} & \textbf{15.25} & \textbf{88.49} & \textbf{43.83} & 12 & \textbf{100} & \textbf{12.06} & \textbf{35.18}\\

\midrule
PQcache & 33$\times$  
& 12.53 &  26.12 & 32.6 & 31.8 & 24.04 & 16.77 &  32& 22.59 & 24.87 & 18.13 & 87.16 & 43.46 & 9 & 100 & 9.68 & 34.14 \\

Quest & 33$\times$  
& 14.79 & 20.34 & 31.81 & 31.68 & 22.12 & 16.1 & 30.19 & 21.88 & 23.02 &  19.54 & 85.62 & 39.98 & 7 & 99 & 9.5 & 33.25 \\

SOCKET & 33$\times$  
& 14.55 & 24.98 &  34.77 & 34.69 & 20.64 & 17.01 & 33.2 & 21.7 & 25.08 &  19.38 & 87.32 & 43.02 & 11 & 99 &  11.73 &  \textbf{34.79} \\

\bottomrule
\end{tabular}
}
\end{table*}

\vspace{-5pt}
\subsection{Why Soft Collisions Yield More Stable Rankings than Hard LSH?}
\vspace{-1.2mm}

Our inference procedure relies on top-$k$ selection from aggregated collision scores.
Thus, the relevant criterion is ranking stability at finite $L$. When two keys have
comparable scores, even small per-table fluctuations can change their ordering and
lead to incorrect retrieval.
Theorem~\ref{thm:main} shows that soft bucketization introduces a controllable bias
$\varepsilon_\tau(q)$, while the finite-table and sampling errors decay as
$L^{-1/2}$ and $M^{-1/2}$. In the limit $\tau\to 0$, this bias vanishes and SOCKET
recovers hard LSH. However, this limit also replaces smooth scores by discrete
collision indicators, which are less informative for ranking at finite $L$.
\\
\noindent
% Hard collisions are maximally noisy;
\textbf{Soft scores preserve directional structure:}
We first isolate the geometric effect of the per-plane scoring rule within a single
hash table. Each random hyperplane produces a signed projection of the key, while the
query determines a corresponding score. The next lemma shows that the resulting
correlation with the true query--key similarity is governed by the alignment between
the projected query and the score vector.

\begin{lemma}\label{lem:orth-corr}
% Let $\qb\in\R{d}$ with $\|\qb\|_2=1$, $\kb\sim\mathcal N(0,I_d)$, and
% $\{\hat{\wb}_i\}_{i=1}^P$ be unit vectors with $\hat{\wb}_i^\top\hat{\wb}_j=0$ for $i\neq j$.
% For deterministic scores $s_i=s_i(\qb,\hat{\wb}_i)$, define
Fix a query $\qb\in\R{d}$ with $\|\qb\|_2=1$.
Let $\{\hat{\wb}_i\}_{i=1}^P\subset\R{d}$ be unit vectors with
$\hat{\wb}_i^\top\hat{\wb}_j=0$ for all $i\neq j$, and set
$\Wb := [\hat{\wb}_1,\ldots,\hat{\wb}_P]^\top\in\RR{P}{d}$.
Let $\kb\sim\mathcal N(0,I_d)$ be an independent Gaussian key.
For any deterministic scores $s_i=s_i(\qb,\hat{\wb}_i)\in\mathbb{R}$, define
$X:=\qb^\top\kb$ and $Y:=\sum_{i=1}^P \operatorname{sign}(\hat{\wb}_i^\top\kb)s_i$.
Then $\EE[X]=\EE[Y]=0$, $\EE[X^2]=1$,
$\EE[Y^2]=\sum_{i=1}^P s_i^2$, and
$\EE[XY]=C\sum_{i=1}^P (\qb^\top\hat{\wb}_i)s_i$ with $C=\EE|r|=\sqrt{2/\pi}$,
$r\sim\mathcal N(0,1)$.
Consequently, the correlation between the true similarity signal $X$ and per-table–aggregated hash score $Y$ is given by
$\Gamma:=
% \frac{\EE[XY]}{\sqrt{\EE[X^2]\EE[Y^2]}}
% = C\,\frac{\sum_i (\qb^\top\hat{\wb}_i)s_i}{\sqrt{\sum_i s_i^2}}
C\,\qb^\top \Wb^\ts\hat{\sbb}$,
where $\sbb:=(s_1,\ldots,s_P)^\top$, $\hat{\sbb}:=\sbb/\|\sbb\|_2$.
\end{lemma}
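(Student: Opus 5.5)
The argument reduces to Gaussian calculations along the orthonormal directions $\hat{\wb}_i$. Write $g_i := \hat{\wb}_i^\top\kb$ for $i=1,\dots,P$. Since $\kb\sim\mathcal N(0,I_d)$ and the $\hat{\wb}_i$ are orthonormal, the $g_i$ are i.i.d.\ $\mathcal N(0,1)$. Decompose the query as
\[
\qb=\sum_{i=1}^P (\qb^\top\hat{\wb}_i)\hat{\wb}_i+\qb_\perp,
\]
where $\qb_\perp$ is orthogonal to every $\hat{\wb}_i$. Then $X=\sum_i (\qb^\top\hat{\wb}_i)g_i+\qb_\perp^\top\kb$, and the Gaussian term $\qb_\perp^\top\kb$ is independent of $(g_1,\dots,g_P)$. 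This independence is the structural fact the remaining steps use.

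The first and second moments follow directly. $\EE[X]=0$ because $X$ is a linear function of a centered Gaussian, and $\EE[X^2]=\|\qb\|_2^2=1$. By symmetry of $g_i$, $\EE[\operatorname{sign}(g_i)]=0$, so $\EE[Y]=0$ because the $s_i$ are deterministic. For $\EE[Y^2]$, expand the square. The diagonal terms give $\operatorname{sign}(g_i)^2=1$ almost surely. The cross terms vanish, since for $i\neq j$ independence gives $\EE[\operatorname{sign}(g_i)\operatorname{sign}(g_j)]=\EE[\operatorname{sign}(g_i)]\,\EE[\operatorname{sign}(g_j)]=0$. Hence $\EE[Y^2]=\sum_i s_i^2$.

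For the cross moment, write
\[
\EE[XY]=\sum_i s_i\,\EE[X\operatorname{sign}(g_i)].
\]
Substitute the decomposition of $X$ and examine $\EE[X\operatorname{sign}(g_i)]$ term by term:
\begin{itemize}
\item the $\qb_\perp^\top\kb$ term vanishes by independence and zero mean;
\item each $g_j$ term with $j\neq i$ vanishes by independence;
\item the $g_i$ term contributes $(\qb^\top\hat{\wb}_i)\,\EE[g_i\operatorname{sign}(g_i)]=(\qb^\top\hat{\wb}_i)\,\EE|g_i|$.
\end{itemize}
For a standard normal, $\EE|r|=2\int_0^\infty r\phi(r)\,dr=\sqrt{2/\pi}$, which gives $\EE[XY]=C\sum_i(\qb^\top\hat{\wb}_i)s_i$.

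The correlation is then
\[
\Gamma=\frac{\EE[XY]}{\sqrt{\EE[X^2]\,\EE[Y^2]}}=\frac{C\sum_i(\qb^\top\hat{\wb}_i)s_i}{\|\sbb\|_2}.
\]
Since $\Wb\qb$ has entries $\hat{\wb}_i^\top\qb$, the numerator sum equals $\qb^\top\Wb^\top\sbb$, so $\Gamma=C\,\qb^\top\Wb^\top\hat{\sbb}$. This form requires $\sbb\neq 0$, which is implicit in the statement.

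No step is a real obstacle. The only point needing care is that orthonormality is exactly what makes the $g_i$ independent and $\qb_\perp^\top\kb$ independent of them. Without it, the cross terms in both $\EE[Y^2]$ and $\EE[XY]$ would not vanish.
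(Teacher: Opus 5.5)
Your proof is correct and follows essentially the same route as the paper's: both project $\kb$ onto the directions $\hat{\wb}_i$, use independence of the resulting Gaussian components, and reduce $\EE[XY]$ to $\EE[r\operatorname{sign}(r)]=\EE|r|$. The differences are minor: you kill the cross terms of $\EE[Y^2]$ by independence where the paper cites the $\frac{2}{\pi}\arcsin(\rho_{ij})$ identity at $\rho_{ij}=0$, and you decompose $\qb$ once in the basis $\{\hat{\wb}_i\}$ plus $\qb_\perp$ where the paper writes $\kb=r\hat{\wb}_i+\ub$ separately for each $i$. That per-index decomposition also shows your closing remark overstates the role of orthogonality: the formula $\EE[XY]=C\sum_i(\qb^\top\hat{\wb}_i)s_i$ holds for arbitrary unit vectors, so orthogonality is only needed for $\EE[Y^2]$.
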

\noindent
Lemma~\ref{lem:orth-corr} gives a general correlation formula for any per-plane
score vector $\sbb$. It shows that the scoring rule affects ranking through the
alignment of the normalized score vector $\hat{\sbb}$ with the projected query
$\Wb\qb$. For hard LSH, the score vector is sign-based and therefore discards magnitude
information in the projected coordinates. In contrast, SOCKET's soft scores vary smoothly with these coordinates and, in the small-signal regime typical of high-dimensional random projections, are approximately linear in $\Wb\qb$. Thus, soft scores preserve more directional information, which helps explain their more stable finite-$L$ rankings. A detailed hard-versus-soft comparison is given in Appendix~\ref{app:add_proofs}. We validate this interpretation empirically by measuring the correlation between
the true signal $\qb^\top\kb$ and the surrogate scores induced by SOCKET and hard
LSH; see Table~\ref{tab:corr_variance_combined}. Using query, key, and value representations
from the final layer of Llama-3.1-8B on SAMSUM and QASPER under a matched
memory budget, SOCKET consistently achieves higher correlation and substantially
lower estimator variance, supporting the view that soft scoring provides a more
stable signal for top-$k$ selection.

\section{Experiments}
\label{sec:exp}

To ensure broad coverage across models supported by all baselines, we evaluate SOCKET on a variety of long-context models of varying scales: Llama-3.1-8B-Instruct~\citep{llama31}, Llama-3.2-1B-Instruct~\citep{llama32} , Qwen3-8B~\citep{qwen3-8b}, Qwen3-4B-Instruct-2507~\citep{qwen3-8b}, and Qwen3-30B-A3B~\citep{qwen3-8b}. All models support context lengths of up to 128K tokens. We evaluate performance on two complementary benchmarks for long-context understanding: (i) \textbf{LongBench}~\citep{bai2024longbench}, which spans question answering, reasoning, summarization, and code understanding tasks with inputs up to tens of thousands of tokens; and (ii) \textbf{RULER}~\citep{hsieh2024ruler}, a synthetic diagnostic benchmark designed to assess retrieval of sparse, position-sensitive information embedded in very long contexts. We obtain baseline results from the Skylight benchmark platform\footnote{\url{https://sky-light.eecs.berkeley.edu}}.
For our experiments, we use the open-source Skylight repository\footnote{\url{https://github.com/skylight-org/sparse-attention-hub}} and implement our method within this framework and test using an NVIDIA H200 GPU. Following the evaluation protocol of vAttention~\citep{desai2025vattention}, we apply dense attention during context processing and sparse attention during question processing and decoding, where the impact of sparsification is most pronounced. Since we adopt this evaluation setup, the results for MagicPIG~\cite{chen2024magicpig} differ from those reported in the original paper. For a fair comparison, we do not retain any dense layers and instead treat all baselines as fully sparse, applying sparsity uniformly across every layer. A detailed discussion about evaluation strategies and how it influences the results for MagicPig can be found in the Appendix~\ref{app:magic_pig}. Consistent with common practice in the sparse attention literature, we include a small number of sink and local window tokens (e.g., 128 tokens) when computing accuracy in our experiments.
\\
\\
\noindent
\textbf{Baselines:} We compare SOCKET against five representative sparse attention methods: MagicPig~\cite{chen2024magicpig}, HashAttention~\cite{desai2025hashattention}, Quest~\cite{pmlr-v235-tang24l}, Double Sparsity~\cite{yang2024double}, and PQCache~\cite{pqcache2025}. These methods cover a broad spectrum of recent top-$k$–based and sampling-based sparsification approaches. For all baselines, we use the hyperparameter settings recommended by the respective authors to ensure a fair comparison.
% \vspace{-10pt}
\subsection{Effectiveness of SOCKET Across Models}
\vspace{-1.2mm}
On \textbf{Llama-3.1-8B-Instruct}, SOCKET consistently outperforms both Quest and PQCache on LongBench across different sparsity regimes. At $10\times$ sparsity, SOCKET scores $48.8$; at $33\times$ sparsity, it scores $47.83$, outperforming the strongest baseline by approximately $2.0$ and $0.84$ points, respectively\footnote{Tables~\ref{tab:lama31-long} and~\ref{tab:qwen-long} show the average performance of all methods on LongBench exluding Passage-Count.}. Furthermore, SOCKET performs comparably to Quest and PQCache on RULER-32K, and attains the best average at 50$\times$ sparsity (see Table~\ref{tab:lama31-ruler}). These gains demonstrate the effectiveness of SOCKET’s soft LSH–based scoring mechanism for selecting important tokens under aggressive sparsification. Next, on the \textbf{Qwen3-8B} model, SOCKET consistently outperforms both PQCache and Quest across sparsity levels. At 10$\times$ sparsity, SOCKET achieves an average score of 35.18, and at 33$\times$ sparsity, it attains an average score of 34.79. In both regimes, SOCKET exceeds PQCache by approximately 0.6 points and Quest by roughly 1.5-2 points (see Table~\ref{tab:qwen-long}). These results further demonstrate that SOCKET is effective across model families. Finally, additional results across a broader range of model sizes are provided in the Appendix (Tables~\ref{tab:lama32-long}, \ref{tab:ruler_recall_comparison}, \ref{tab:socket_qwen3_30b_a30}, and \ref{tab:socket_qwen3_4b}), demonstrating that SOCKET achieves performance comparable to dense models at extreme sparsity levels.

\subsection{Efficiency of SOCKET}
\vspace{-1.2mm}
We evaluate the decoding efficiency of SOCKET on NVIDIA A100 and H200 GPUs using decode-only throughput. To this end, we extend the GPT-Fast codebase to support sparse attention, combining a custom CUDA kernel for key scoring with a Flash Decode Triton kernel for exact attention over the retrieved top-$k$ keys. Across both platforms, SOCKET delivers throughput gains that become increasingly pronounced with longer context lengths. On the H200, SOCKET is slightly slower than FlashAttention3~\citep{shah2024flashattention} at 32K tokens ($0.93\times$), but surpasses both FlashAttention2~\citep{Dao2023FlashAttention2} and FlashAttention3 as the contexts grow. Relative to FlashAttention2, it achieves speedups of $1.01\times$, $1.38\times$, $1.58\times$, and $1.84\times$ at 32K, 70K, 100K, and 140K tokens, respectively. Compared to FlashAttention3, the speedup increases from $1.24\times$ at 70K tokens to $1.58\times$ at 140K tokens under $33\times$ sparsity (see fig.~\ref{fig:decode_h200}). Similarly, on the A100 GPU, SOCKET consistently outperforms FlashAttention2, achieving speedups of $1.19\times$, $1.32\times$, and $1.53\times$ at 18K, 36K, and 72K tokens, respectively (see fig.~\ref{fig:decode_a100}). These results show that the efficiency benefits of SOCKET scale favorably with context length, making it particularly effective for long-context decoding workloads.

\section{Conclusion and Limitations}
In this work, we evaluate SOCKET on two model families of different scales, namely Llama3~\citep{llama31} and Qwen3~\citep{qwen3-8b}. Its generalization to other model families, such as hybrid architectures, remains an open question. SOCKET improves performance with a modest additional memory overhead from hash table storage, amounting to approximately 15\% beyond the standard KV cache. The soft and data-agnostic nature of SOCKET makes it a promising approach for applications that require stable ranking and efficient selection of relevant entities.

\section*{Acknowledgments}
The work was primarily supported by Rice Ken Kennedy Institute (K2I) Generative AI Cluster Funding. We are grateful to Professor Yuke Wang for his insights on designing the CUDA scoring kernel.

\setlength{\bibsep}{6pt}
\bibliographystyle{abbrvnat}
{
\bibliography{bibliography.bib}
}

\newpage
\appendix

\section{Appendix}

\subsection{Ablations for SOCKET}\label{sec:abl1}
\label{app:ablation_socket}

We ablate the key \textsc{SOCKET} hyperparameters on five RULER-32K-Hard datasets
(\texttt{nm2}, \texttt{qa1}, \texttt{vt}, \texttt{nm3}, \texttt{qa2})
under the $20\times$ sparsity setting for Llama-3.1-8B-Instruct.
We study the number of hyperplanes $P$, the number of hash tables $L$, and the
soft-hashing temperature $\tau$. The final column reports the average score.

\begin{table}[h]
\centering
\scriptsize
\setlength{\tabcolsep}{3pt}
\renewcommand{\arraystretch}{0.9}

\begin{minipage}[t]{0.32\linewidth}
\vspace{0pt}
\centering
\begin{tabular}{c|cccccc}
\hline
$P$ & nm2 & qa1 & vt & nm3 & qa2 & Avg \\
\hline
4  & 68 & 53 & 79.0 & 1  & 40 & 48.20 \\
5  & 83 & 65 & 78.8 & 15 & 50 & 58.36 \\
6  & 91 & 73 & 85.4 & 52 & 53 & 70.88 \\
7  & 93 & 73 & 87.6 & 84 & 52 & 77.92 \\
8  & 95 & 75 & 92.0 & 91 & 51 & 80.80 \\
9  & 96 & 77 & 93.4 & 91 & 52 & 81.88 \\
10 & 96 & 78 & 96.4 & 94 & 52 & 83.28 \\
\hline
\end{tabular}
\vspace{2pt}
\centerline{\textbf{(a)} Varying $P$; $\tau=0.4$, $L=60$.}
\end{minipage}
\hfill
\begin{minipage}[t]{0.32\linewidth}
\vspace{0pt}
\centering
\begin{tabular}{c|cccccc}
\hline
$L$ & nm2 & qa1 & vt & nm3 & qa2 & Avg \\
\hline
10 & 75 & 61 & 79.2 & 31 & 48 & 58.84 \\
20 & 87 & 71 & 86.6 & 67 & 50 & 72.32 \\
40 & 93 & 78 & 91.2 & 92 & 50 & 80.84 \\
60 & 98 & 76 & 92.8 & 93 & 51 & 82.16 \\
70 & 96 & 76 & 93.0 & 93 & 53 & 82.20 \\
\hline
\end{tabular}
\vspace{2pt}
\centerline{\textbf{(b)} Varying $L$; $\tau=0.5$, $P=10$.}
\end{minipage}
\hfill
\begin{minipage}[t]{0.32\linewidth}
\vspace{0pt}
\centering
\begin{tabular}{c|cccccc}
\hline
$\tau$ & nm2 & qa1 & vt & nm3 & qa2 & Avg \\
\hline
0.1 & 85 & 71 & 78.8 & 71 & 49 & 70.96 \\
0.2 & 94 & 78 & 93.6 & 95 & 50 & 82.12 \\
0.3 & 94 & 82 & 90.0 & 96 & 52 & 82.80 \\
0.4 & 96 & 78 & 96.4 & 94 & 52 & 83.28 \\
0.5 & 98 & 76 & 92.8 & 93 & 51 & 82.16 \\
0.6 & 95 & 75 & 90.6 & 84 & 51 & 79.12 \\
0.7 & 92 & 76 & 85.2 & 67 & 53 & 74.64 \\
0.8 & 89 & 68 & 85.2 & 29 & 51 & 64.40 \\
\hline
\end{tabular}
\vspace{2pt}
\centerline{\textbf{(c)} Varying $\tau$; $P=10$, $L=60$.}
\end{minipage}

\caption{Hyperparameter ablations for \textsc{SOCKET} on RULER-32K-Hard at $20\times$ sparsity using Llama-3.1-8B-Instruct.}
\label{tab:socket_ablation}
\end{table}
\noindent
We observe consistent trends across all three hyperparameters.
Increasing $P$ improves discrimination between buckets and steadily increases accuracy, with gains saturating beyond $P=9$.
Similarly, increasing the number of hash tables $L$ improves recall by reducing approximation error, though the marginal gains diminish after $L=60$.
For the temperature parameter $\tau$, intermediate values ($\tau \in [0.3, 0.5]$) achieve the best performance, balancing the bias--variance trade-off between hard and overly smooth soft hashing. These empirical findings are consistent with Theorem~\ref{thm:main}.
In particular, the finite-$L$ approximation error decreases as $L$ increases, while moderate $\tau$ preserves score discrimination without introducing instability.
Based on this trade-off, we use $P=10$, $L=60$, and $\tau \in [0.3, 0.5]$ in all main experiments.

\subsection{Ablations for Hard LSH Estimator}\label{sec:abl2}
\label{app:ablations_lsh}

In this section, we evaluate Hard LSH under the same memory budget as \textsc{SOCKET} (600 bits per token) on five RULER32K-Hard datasets (\texttt{nm2}, \texttt{qa1}, \texttt{vt}, \texttt{nm3}, and \texttt{qa2}) at $20\times$ sparsity using Llama-3.1-8B-Instruct.
The corresponding \textsc{SOCKET} results under the same budget are provided in Section~\ref{app:ablation_socket}.

\begin{table}[h]
\centering
\scriptsize
\setlength{\tabcolsep}{3pt}
\renewcommand{\arraystretch}{0.9}

% -------- Row 1 --------
\begin{minipage}[t]{0.48\linewidth}
\vspace{0pt}
\centering
\begin{tabular}{c|cccccc}
\hline
$P$ & nm2 & qa1 & vt & nm3 & qa2 & Avg \\
\hline
1 & 40 & 46 & 71.2 & 3  & 44 & 40.84 \\
2 & 70 & 63 & 79.4 & 30 & 47 & 57.88 \\
3 & 74 & 63 & 80.6 & 22 & 49 & 57.72 \\
4 & 62 & 60 & 72.4 & 20 & 49 & 52.68 \\
5 & 43 & 58 & 63.2 & 5  & 48 & 43.44 \\
\hline
\end{tabular}
\vspace{2pt}
\centerline{\textbf{(a)} Varying $P$ ($L=60$).}
\end{minipage}
\hfill
\begin{minipage}[t]{0.48\linewidth}
\vspace{0pt}
\centering
\begin{tabular}{cc|cccccc}
\hline
$L$ & Bits & nm2 & qa1 & vt & nm3 & qa2 & Avg \\
\hline
70  & 140  & 76 & 69 & 81.4 & 33 & 47 & 61.28 \\
100 & 200  & 85 & 71 & 85.6 & 66 & 49 & 71.32 \\
150 & 300  & 93 & 73 & 89 & 80 & 51 & 77.20 \\
200 & 400  & 94 & 76 & 91 & 88 & 50 & 79.80 \\
250 & 500  & 94 & 76 & 94 & 93 & 51 & 81.60 \\
300 & 600  & 97 & 81 & 95 & 94 & 53 & 84 \\
\hline
\end{tabular}
\vspace{2pt}
\centerline{\textbf{(b)} Varying $L$ (fixed $P=2$).}
\end{minipage}

\vspace{6pt} % space between rows

% -------- Row 2 --------
\begin{minipage}[t]{0.6\linewidth}
\vspace{0pt}
\centering
\begin{tabular}{cc|cccccc}
\hline
$L$ & Bits & nm2 & qa1 & vt & nm3 & qa2 & Avg \\
\hline
350 & 700  & 97 & 81 & 95.2 & 94 & 53 & 84.04 \\
400 & 800  & 97 & 81 & 95.8 & 93 & 52 & 83.76 \\
450 & 900  & 96 & 79 & 95.8 & 91 & 50 & 82.36 \\
500 & 1000 & 98 & 82 & 96.0 & 95 & 53 & 84.8 \\
\hline
\end{tabular}
\vspace{2pt}
\centerline{\textbf{(c)} Larger memory budget.}
\end{minipage}

\caption{Hard LSH ablations on RULER32K-Hard at $20\times$ sparsity (Llama-3.1-8B-Instruct).}
\label{tab:hardlsh_ablation}
\end{table}
\noindent
Hard LSH shows strong sensitivity to the hyperparameter $P$.
The best performance is achieved at $P=2$, while larger values substantially degrade accuracy due to increasingly sparse and brittle bucket assignments.
Unlike \textsc{SOCKET}, which leverages soft bucket probabilities, hard LSH relies on discrete collisions and therefore requires substantially larger $L$ to recover comparable retrieval quality. Under the same memory budget of 600 bits/token, hard LSH reaches an average score of 84, still below \textsc{SOCKET}'s 85.08. Even when the memory budget is increased to 1000 bits/token, Hard LSH reaches 84.80, which remains lower than \textsc{SOCKET} under significantly smaller memory usage.

\subsection{Comparison with MagicPIG}
\label{app:magic_pig}

The variation in the reported results relative to the original MagicPIG~\cite{chen2024magicpig} primarily arises from differences in the evaluation protocol.
As discussed in vAttention~\cite{desai2025vattention} (Table~10), two evaluation setups are commonly used for sparse attention methods.

\paragraph{Setup A:} In this setup, the full context and question are first processed using dense attention, and sparsity is applied only during the decoding stage.
This is the setup used in MagicPIG~\cite{chen2024magicpig}.

\paragraph{Setup B:} In this setup, sparsity is applied during both question processing and decoding. This is the setup used in our evaluation. Compared to Setup A, this setting is substantially more challenging, since the sparse method must not only support efficient decoding but also retrieve relevant information while processing the question itself.
\\
\noindent
In addition, MagicPig retains (0, 16) dense layers as a fallback mechanism. To ensure a fair comparison under a fully sparse setting, we remove all dense layers and evaluate MagicPig as a completely sparse method in Table~\ref{tab:lama31-ruler}, applying sparsity uniformly across every layer. We additionally evaluate a hybrid dense--sparse variant that retains the 0th and 16th layers as dense, consistent with the original MagicPig design, while making the remaining layers correspondingly sparser to preserve a comparable overall sparsity ratio.
The results are shown in Table~\ref{tab:magicpig_appendix}.

\begin{table}[h]
\centering
\small
\setlength{\tabcolsep}{4pt}
\caption{Comparison with MagicPIG under different evaluation settings.}
\label{tab:magicpig_appendix}
\vspace{2pt}
\begin{tabular}{lccccccc}
\hline
Method & Sparsity & nm2 & nm3 & vt & qa1 & qa2 & Avg \\
\hline
MagicPIG (0,16 dense) & $5\times$  & 55 & 16 & 96.6 & 67 & 47 & 56.32 \\
MagicPIG (0,16 dense) & $10\times$ & 36 & 16 & 92.4 & 60 & 48 & 50.48 \\
MagicPIG (0,16 dense) & $50\times$ & 12 & 8  & 78.6 & 28 & 34 & 32.12 \\
\hline
MagicPIG (fully sparse) & $5\times$  & 10 & 0 & 82.8 & 38 & 42 & 34.56 \\
MagicPIG (fully sparse) & $10\times$ & 2  & 0 & 32.2 & 35 & 29 & 19.64 \\
MagicPIG (fully sparse) & $50\times$ & 1  & 0 & 0.0  & 25 & 29 & 11.00 \\
\hline
\textsc{SOCKET} & $5\times$  & 97 & 100 & 95.2 & 84 & 53 & 85.84 \\
\textsc{SOCKET} & $10\times$ & 95 & 100 & 94.2 & 82 & 53 & 84.84 \\
\textsc{SOCKET} & $50\times$ & 83 & 74  & 83.6 & 77 & 50 & 73.52 \\
\hline
\end{tabular}
\end{table}
\noindent
As expected, the hybrid dense--sparse MagicPig configuration performs better than its fully sparse counterpart. However, even under this more favorable setting, MagicPIG remains consistently below \textsc{SOCKET} across all sparsity levels.

\newpage
\subsection{Additional Experiments}\label{sec:addl}

\begin{table}[h]
\centering
\caption{Comparison of dense and sparse attention methods on LongBench using Llama-3.2-1B-Instruct.}
\label{tab:lama32-long}
\resizebox{\textwidth}{!}{
\begin{tabular}{llrrrrrrrrrrrrrrrr}
\toprule
Method & Sparsity & NQA & QAS & MFQA & HPQA & WIKI & MUS & GOV & QMSUM & MNews & LCC & Trivia & SamSUM & Count & Retrieval & Repo & \textit{AVG} \\
\midrule
Baseline & Dense 
& 16.12 & 26.54 & 42.41 & 29.31 & 31.27 & 14.83 & 30.14 & 21.62 & 25.78 & 32.51 & 70.19 & 7.38 & 2 & 4 & 28.25 & 27.1 \\

\midrule
PQcache & 10$\times$ 
& 14.12 & 28.16 & 39.44 & 31.57 & 29.2 & 8.52 & 27.18 & 20.62 & 24.19 & 41.59 & 72.52 & 8.55 & 4 & 4 & 30.8 & \textbf{27.1} \\

Quest & 10$\times$ 
& 10.61 & 23.18 & 35.13 & 29.47 & 23.29 & 10.88 & 26.95 & 20.38 & 24.32 & 41.46 & 62.82 & 10.5 & 2.0 & 4.0 & 32.33 & 25.3 \\

SOCKET & 10$\times$ 
& 14.32 & 25.93 & 37.84 & 27.73 & \textbf{31.48} & 8.55 & 26.16 & \textbf{20.96} & \textbf{24.5} & \textbf{41.69} & 69.46 & \textbf{12.93} & 3 & \textbf{5} & 29.43 & 26.8 \\

\midrule
PQcache & 33$\times$  
& 13.19 &  24.27 & 31.84 & 32.64 & 25.45 & 6.49 & 25.06 & 20.78 & 24.01  & 39.89 & 68.97 & 10.47 & 4 &  5 & 30.2 & \textbf{25.59} \\

Quest & 33$\times$  
& 12.61 & 17.11 & 28.43 & 28.23 & 31.29 & 7.36 & 23.06 & 20.59 & 22.18 & 39 & 57.97 & 13.84 & 3 & 6 & 30.63 & 24.16\\

SOCKET & 33$\times$  
& 13.83 & 19.05 & 31.93 & 26.67 & 29.08 & 6.02 & 24.05 & 20.4 & 23.16 & 41.51 & 55.47 & 15.49 & 3 & 6 & 30.61  & 24.51  \\

\bottomrule
\end{tabular}
}
\end{table}

\begin{table}[H]
\centering
\caption{Comparison of dense and sparse attention methods on RULER-16K using Llama-3.1-8B-Instruct at 10$\times$ sparsity. This table is adapted from~\cite{desai2025vattention}.}
\label{tab:ruler_recall_comparison}

\setlength{\tabcolsep}{3pt}
\renewcommand{\arraystretch}{1.1}

\begin{tabular}{l *{12}{c}}
\toprule
 & \rotatebox{90}{niah\_single\_1}
 & \rotatebox{90}{niah\_single\_2}
 & \rotatebox{90}{niah\_single\_3}
 & \rotatebox{90}{niah\_multikey\_1}
 & \rotatebox{90}{niah\_multiquery}
 & \rotatebox{90}{niah\_multivalue}
 & \rotatebox{90}{vt}
 & \rotatebox{90}{qa\_1}
 & \rotatebox{90}{qa\_2}
 & \rotatebox{90}{fwe}
 & \rotatebox{90}{niah\_multikey\_2}
 & \rotatebox{90}{niah\_multikey\_3} \\
\midrule
dense attention
 & 100 & 100 & 100 & 100 & 97 & 98.5 & 97.4 & 80.5 & 51.5 & 93.17 & 99.5 & 100 \\
vAttention (oracle-top-$k$)
 & 100 & 100 & 100 & 100 & 97 & 98 & 97.5 & 79.5 & 51.5 & 93.17 & 99.5 & 100 \\
oracle-top-$k$
 & 100 & 100 & 100 & 100 & 98.5 & 97.5 & 97.6 & 73.5 & 48 & 93.17 & 99.5 & 99.5 \\
vAttention (HashAttention)
 & 100 & 100 & 100 & 100 & 98 & 94 & 96.2 & 76 & 48 & 93.83 & 98.5 & 95 \\
HashAttention
 & 100 & 100 & 100 & 100 & 99 & 98 & 89 & 73 & 45.5 & 91.33 & 88.5 & 87.5 \\
SOCKET
 & \textbf{100} & \textbf{100} & \textbf{100} & \textbf{100} & 98.25 & 98.5 & \textbf{88.8} & 88 & 54 & 88.67 & 98.0 & 98.0 \\
\bottomrule
\end{tabular}
\end{table}

\begin{table}[H]
\centering
\small
\setlength{\tabcolsep}{3.5pt}
\begin{tabular}{lcccccccc}
\toprule
Method & Sparsity & vt & qa1 & qa2 & fwe & niah-2 & niah-3 & AVG \\
\midrule
Baseline & Dense & 99.8 & 88 & 59 & 100 & 99.3 & 100 & 91.02 \\
SOCKET & 5$\times$  & 100 & 88 & 60 & 98.67 & 100 & 100 & 91.11 \\
SOCKET & 10$\times$ & 100 & 87 & 62 & 97.67 & 100 & 100 & 91.11 \\
SOCKET & 20$\times$ & 100 & 87 & 60 & 97.33 & 100 & 100 & 90.72 \\
SOCKET & 50$\times$ & 100 & 85 & 57 & 95.67 & 100 & 100 & 89.61 \\
\bottomrule
\end{tabular}
\vspace{3pt}
\caption{SOCKET performance using Qwen3-30B-A3B on RULER-HARD-32K at different sparsity levels.}
\label{tab:socket_qwen3_30b_a30}
\end{table}

\begin{table}[H]
\centering
\small
\setlength{\tabcolsep}{3.5pt}
\begin{tabular}{lcccccccc}
\toprule
Method & Sparsity & vt & qa1 & qa2 & fwe & niah-2 & niah-3 & AVG \\
\midrule
Baseline & Dense & 100 & 79 & 60 & 95 & 99 & 99 & 88.67 \\
SOCKET & 5$\times$  & 100 & 78 & 60 & 94 & 99 & 99 & 88.33 \\
SOCKET & 10$\times$ & 100 & 78 & 53 & 92.33 & 99 & 98 & 86.72 \\
SOCKET & 20$\times$ & 100 & 76 & 52 & 92 & 99 & 99 & 86.33 \\
SOCKET & 50$\times$ & 100 & 68 & 52 & 91.33 & 92 & 76 & 79.89 \\
\bottomrule
\end{tabular}
\vspace{3pt}
\caption{\small SOCKET performance using Qwen3-4B-Instruct-2507 on RULER-HARD-32K at different sparsity levels.}
\label{tab:socket_qwen3_4b}
\end{table}

\begin{table}[H]
\centering
\caption{Hyperparameter settings used across datasets and models.}
\label{tab:hyperparams}
\begin{tabular}{ccccl}
\toprule
$P$ & $L$ & $\tau$ & Dataset & Model \\
\midrule
8  & 60 & 0.3-0.7 &LongBench      & Llama-3.1-8B-Instruct \\
8  & 60 & 0.3-0.7 & LongBench      & Llama-3.2-1B-Instruct \\
8  & 60 & 0.3-0.7 & LongBench      & Qwen3-8B \\
10 & 60 & 0.3-0.7 & RULER-16K       & Llama-3.1-8B-Instruct \\
10 & 60 & 0.3-0.7 & RULER-32K       & Llama-3.1-8B-Instruct \\
10 & 60 & 0.3-0.7 & RULER-32K       & Qwen3-4B-Instruct-2507 \\
10-12 & 60 & 0.3-0.7 & RULER-32K       & Qwen3-30B-A3B \\
\bottomrule
\end{tabular}
\end{table}

% \newpage
\subsection{Definitions of metrics used in fig.~\ref{fig:ranking_metrics}}
\label{sec:metric_def}
\textbf{Normalized Discounted Cumulative Gain (NDCG):} NDCG measures the quality of a ranked list by accounting for both item relevance and ranking position, assigning higher weight to relevant items appearing earlier.
Given a ranked list of length $k$ with relevance scores $\{r_i\}_{i=1}^k$, the discounted cumulative gain is defined as
\[
\mathrm{DCG} = \sum_{i=1}^{k} \frac{2^{r_i} - 1}{\log_2(i + 1)}.
\]
Let $\mathrm{IDCG}$ denote the DCG obtained by sorting items in decreasing order of relevance. The normalized score is then
\[
\mathrm{NDCG} = \frac{\mathrm{DCG}}{\mathrm{IDCG}}.
\]
\\
\noindent
\textbf{Precision:} Precision measures the fraction of retrieved items that are relevant.
Let $S_k$ denote the set of top-$k$ retrieved items and $R$ the set of relevant items. Precision is defined as
\[
\mathrm{Precision} = \frac{|S_k \cap R|}{k}.
\]
\\
\noindent
\textbf{Jaccard Similarity:} Jaccard similarity measures the overlap between two sets.
Given two sets $A$ and $B$, it is defined as
\[
\mathrm{Jaccard}(A, B) = \frac{|A \cap B|}{|A \cup B|}.
\]

\section{Proof of Theorem~\ref{thm:main}}\label{thm:proof}
\label{app:proof_b}

\subsection{Additional notes on Theorem~\ref{thm:main}}\label{sec:notations}
Theorem~\ref{thm:main} provides an end-to-end error decomposition for the proposed
soft-count attention estimator $\Tb(\qb)$. The bound separates the total error into
three components: (i) sampling variance arising from the value-aware sampling step,
(ii) finite-table approximation error due to using a limited number of hash tables,
and (iii) a bias induced by soft bucketization, quantified by
$\varepsilon_\tau=\EE\!\left[1-p_\tau^{(\ell)}(b_q\mid \qb)\right]$.
This decomposition makes explicit how the algorithmic parameters jointly control
the different sources of error.
%
% Due to space constraints, the complete proof of Theorem~\ref{thm:main} is deferred to Appendix~\ref{thm:proof}. 
To provide intuition, the analysis is organized around the
following triangle inequality:
\begin{align}
\|\mathbf{T}(\qb)-\yb^*(\qb)\|_2
\;\le\;
\|\mathbf{T}(\qb)-\yb_{\tau,L}(\qb)\|_2
+\|\yb_{\tau,L}(\qb)-\yb_\tau(\qb)\|_2
+\|\yb_\tau(\qb)-\yb^*(\qb)\|_2 ,
\label{eq:triangle_b}
\end{align}
which isolates the contributions of sampling variance, finite-table effects, and
soft-bucketization bias, respectively. Sections~\ref{sec:int} and \ref{proof_final} revolve around bounding each of them separately.

\paragraph{Role of $M$ and $L$.}
The terms $M^{-1/2}$ and $L^{-1/2}$ capture two independent variance sources.
The $M^{-1/2}$ term arises from the value-aware sampling step and reflects the
Monte Carlo error incurred when approximating the soft attention output using
$M$ sampled values.
The $L^{-1/2}$ term corresponds to the finite-table approximation of the soft
collision kernel and quantifies the concentration of the $L$-table estimator
around its population limit.
Both terms decay at the standard parametric rate and are independent of the
sequence length $N$, highlighting the scalability of the method. They can be controlled separately depending on computational and
memory constraints.

\paragraph{Role of the temperature $\tau$.}
The term $\varepsilon_\tau(\qb)$ in Theorem~\ref{thm:main} captures the approximation bias induced by soft bucketization.
Fix a table $\ell$ and define the hard SRP bucket of the query by
\[
b_q^{(\ell)} := h^{(\ell)}(\qb)=\operatorname{sign}(\Wb^{(\ell)}\qb)\in\{\pm1\}^P
% \quad\text{(equivalently, an index in $[R]$ with $R=2^P$)}.
\]
Recall that Algorithm~\ref{alg:soft-bucket-probs} defines
\[
p_\tau^{(\ell)}(r\mid\qb)
=
\frac{\exp\!\big(\ub^{(\ell)}(\qb)^\top \cbb_r/\tau\big)}
{\sum_{r'=1}^{R}\exp\!\big(\ub^{(\ell)}(\qb)^\top \cbb_{r'}/\tau\big)},
% \qquad
% \ub^{(\ell)}(\qb)=\frac{1}{\sqrt d}\tanh(\Wb^{(\ell)}\qb)\in\R^P.
\]
We define the peaking (non-dominant mass) quantity
\[
\varepsilon_\tau(\qb)
:=
\mathbb{E}_{\Wb^{(\ell)}}\!\left[\,1-p_\tau^{(\ell)}\!\big(b_q^{(\ell)}\mid\qb\big)\right],
\]
which measures how much probability mass the soft bucket distribution assigns to buckets other than the hard query bucket.
To see how $\tau$ controls this bias, write the bucket logits as
$x_r := \ub^{(\ell)}(\qb)^\top \cbb_r$ for $r\in[R]$, so that
$p_\tau^{(\ell)}(r\mid\qb)=\exp(x_r/\tau)/\sum_{r'}\exp(x_{r'}/\tau)$.
Let $r^\star:=\arg\max_{r\in[R]} x_r$. Since $\tanh(\cdot)$ is strictly increasing coordinatewise,
we have $r^\star=b_q^{(\ell)}$, and factoring out the maximum logit yields
\begin{flalign}
p_\tau^{(\ell)}(b_q\mid\qb)=\frac{\exp\!\big(x_{b_\qb}/\tau)}{\sum_{r=1}^R \exp\!\big(x_{r}/\tau)}
=\frac{1}{1+\sum_{r\neq b_q}\exp\!\big((x_r-x_{b_q})/\tau\big)}\nonumber
\end{flalign}
Because $x_r-x_{b_q}<0$ for all $r\neq b_q$ and $R=2^P$ is finite for fixed $P$,
we have $\exp((x_r-x_{b_q})/\tau)\to 0$ as $\tau\to 0$, hence
$p_\tau^{(\ell)}(b_q\mid\qb)\to 1$ and therefore $\varepsilon_\tau(\qb)\to 0$ as $\tau\to 0$ (for fixed $P$).
In the opposite extreme $\tau\to\infty$, we have $\exp(x_r/\tau)=1+\Ocal(1/\tau)$ and thus
$p_\tau^{(\ell)}(r\mid\qb)\to 1/R$ for all $r$, i.e., the bucket distribution uniformizes and
$\varepsilon_\tau(\qb)\to 1-1/R$.
Consequently, $\tau$ interpolates between hard bucketing ($\tau\to 0$) and uniform bucket weights ($\tau\to\infty$),
trading approximation bias against smoother, more stable similarity scores at finite $\tau$.

\paragraph{Error-bound without sampling.}
If the value-aware sampling step is omitted, the estimator reduces to the finite-table
soft-count attention output $\yb_{\tau,L}(\qb)$.
In this setting, the randomness arises solely from the $L$ random hash tables, and the
sampling variance term in Theorem~\ref{thm:main} disappears.
Consequently, under the same assumptions and with probability at least $1-\delta$ over the
hash-table randomness, we have
\[
\|\yb_{\tau,L}(\qb)-\yb^*(\qb)\|_2
=
\widetilde{\mathcal O}\!\left(
\frac{1}{\sqrt{L}}
+\varepsilon_\tau(\qb)
\right)\|\Vb\|_2.
\]
The $L^{-1/2}$ term captures the concentration of the finite-table soft-count estimator
around its population limit $\yb_\tau(\qb)$, while $\varepsilon_\tau(\qb)$ quantifies the bias
introduced by soft bucketization relative to angular attention.
Thus, in the absence of sampling, soft collision aggregation provides a statistically
controlled proxy for angular attention, with accuracy governed by the number of hash tables
and the temperature parameter.

\subsection{Intermediate Lemmas}\label{sec:int}
Fix a query $\qb \in \mathbb{R}^d$, keys $\kb_1,\dots \kb_N \in\mathbb{R}^d$ and values $\mathbf{v}_1,\dots \mathbf{v}_N\in \mathbb{R}^{d}$. Define the angular kernel weights $w_j=\left(1-\frac{1}{\pi} \cos ^{-1}\left(\nicefrac{\qb^{\top} \kb_j}{\|\qb\|\left\|\kb_j\right\|}\right)\right)^P \in[0,1]$ and $Z:=\sum_{j=1}^N w_j$. With this, define the angular attention distribution and output:

$$
a_j:=\frac{w_j}{Z}, \quad \yb^*(\qb):=\sum_{j=1}^N a_j \vb_j .
$$
\noindent
\textbf{\emph{Soft-count} distribution}: For each table $\ell \in[L]$, draw hyperplanes $\Wb^{(\ell)} \in \mathbb{R}^{P \times d}$ with i.i.d. $\mathcal{N}(0,1)$ rows. Let corners $\mathbf{c}_r \in\{ \pm 1\}^P$ for $r \in[R], R=2^P$. Define

$$
\mathbf{u}^{(\ell)}(\qb):=\frac{1}{\sqrt{d}} \tanh \left(\Wb^{(\ell)} \qb\right) \in \mathbb{R}^P, \quad p_\tau^{(\ell)}(r \mid \qb):=\frac{\exp \left(\mathbf{u}^{(\ell)}(\qb)^\top \mathbf{c}_r / \tau\right)}{\sum_{r^{\prime}=1}^R \exp \left(\mathbf{u}^{(\ell)}(\qb)^\top \mathbf{c}_{r^{\prime}}/ \tau\right)} .
$$

\noindent
Let the $\kb_j$'s (hard) bucket id be
$
b^{(\ell)}_j \in[R] \quad \text {(equivalently the sign pattern of } \Wb^{(\ell)} \kb_j \text { ). }
$ Define the per-table soft score and the $L$-table soft-count:

$$
s_j^{(\ell)}(\qb):=p_\tau^{(\ell)}\left(b^{(\ell)}_j \mid \qb\right) \in[0,1], \quad \widetilde{w}_j:=\frac{1}{L} \sum_{\ell=1}^L s_j^{(\ell)}(\qb), \quad \widetilde{Z}:=\sum_{j=1}^N \widetilde{w}_j, \quad \widetilde{a}_j:=\frac{\widetilde{w}_j}{\widetilde{Z}} .
$$

\noindent
Now, we define the ``soft attention" output:
$
\yb_{\tau, L}(\qb):=\sum_{j=1}^N \widetilde{a}_j \vb_j=\frac{\widetilde{n}(\qb)}{\widetilde{Z}}\in\mathbb{R}^d$. Similarly, we define the population (single-table) soft-count weight
$
w_{\tau, j}:=\mathbb{E}\left[s_j^{(1)}(\qb)\right], \quad Z_\tau:=\sum_{j=1}^N w_{\tau, j}, \quad a_{\tau, j}:=\frac{w_{\tau, j}}{Z_\tau}$.
Accordingly, the soft-attention output of the population is given by
$
\yb_\tau(\qb):=\sum_{j=1}^N a_{\tau, j} \vb_j=\frac{\mathbf{n}_{\tau}(\qb)}{Z_\tau}
$,
where the expectations are over the randomness in the table, i.e. $\Wb^{(\ell)}$ and $\mathbf{n}_{\tau}(\qb)=\sum_{j=1}^N w_{\tau, j} \vb_j \in \mathbb{R}^d$.
\\
\noindent
Note that Assumption~\ref{as2} directly implies

\begin{flalign}
 Z^{(\ell)}(\qb):=&~\sum_{j=1}^N s_j^{(\ell)}(\qb)=\sum_{j=1}^N p_\tau^{(\ell)}\left(b^{(\ell)}_j \mid \qb\right)=\sum_{r=1}^R \sum_{b_j^{(\ell)}=r} p_\tau^{(\ell)}(r \mid \qb)\nonumber\\
 =&~\sum_{r=1}^R p_\tau^{(\ell)}(r \mid \qb)(\sum_{b_j^{(\ell)}=r} 1)\le B\sum_{r=1}^R p_\tau^{(\ell)}(r \mid \qb)=B.\label{eq:A2}
\end{flalign}

% \subsection{Results}

%%%%%%%%%%%%%%%%%%%%%%%%%%%%%%%%%%%%%%%%%%%%%%%%

\begin{lemma}[Concentration of $\widetilde{Z}$ and $\widetilde{\mathbf{n}}(\qb)$]\label{lem:zhat}
If $L\ge\frac{2B^2\log (4/\delta_\mathrm{L})}{Z_{\tau, \min }^2}$, then with probability at least $1-\delta_\mathrm{L}$,
\begin{flalign}
\left|\widetilde{Z}-Z_\tau\right| \leq B \sqrt{\frac{\log (4 / \delta_\mathrm{L})}{2 L}}
\quad\text{and}\quad
\left\|\widetilde{\mathbf{n}}-\mathbf{n}_{\tau}\right\|_2 \leq 2\|\mathbf{V}\|_2 \sqrt{2 B} \sqrt{\frac{\log (4 / \delta_\mathrm{L})}{L}},
\end{flalign}
and moreover $\widetilde{Z}\ge \frac{Z_{\tau,\min}}{2}$.
\end{lemma}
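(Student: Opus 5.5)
The plan is to treat both quantities as averages of $L$ i.i.d.\ per-table contributions and apply standard concentration inequalities. Each uses budget $\delta_\mathrm{L}/2$, and a union bound gives overall probability $1-\delta_\mathrm{L}$. Since the hyperplane matrices $\Wb^{(1)},\dots,\Wb^{(L)}$ are independent and identically distributed, the per-table normalizers $Z^{(\ell)}(\qb)=\sum_j s_j^{(\ell)}(\qb)$ are i.i.d. Likewise, the per-table numerators $\mathbf{n}^{(\ell)}(\qb):=\sum_j s_j^{(\ell)}(\qb)\vb_j$ are i.i.d. Moreover,
\[
\widetilde Z=\frac1L\sum_{\ell} Z^{(\ell)}, \qquad \widetilde{\mathbf{n}}=\frac1L\sum_\ell \mathbf{n}^{(\ell)}, \qquad \mathbb{E}[Z^{(\ell)}]=Z_\tau, \qquad \mathbb{E}[\mathbf{n}^{(\ell)}]=\mathbf{n}_\tau.
\]

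\textbf{Step 1 (scalar normalizer).} By \eqref{eq:A2}, each $Z^{(\ell)}\in[0,B]$. Hoeffding's inequality therefore gives $\Pr\bigl(|\widetilde Z-Z_\tau|>t\bigr)\le 2\exp(-2Lt^2/B^2)$. Setting the right-hand side to $\delta_\mathrm{L}/2$ yields $t=B\sqrt{\log(4/\delta_\mathrm{L})/(2L)}$, which is the first claimed bound. For the lower bound, note that the hypothesis $L\ge 2B^2\log(4/\delta_\mathrm{L})/Z_{\tau,\min}^2$ is equivalent to $t\le Z_{\tau,\min}/2$. Combined with Assumption~\ref{as1} ($Z_\tau\ge Z_{\tau,\min}$), this gives $\widetilde Z\ge Z_\tau - t\ge Z_{\tau,\min}/2$ on the same event.

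\textbf{Step 2 (vector numerator).} This is the main step. Write $\mathbf{n}^{(\ell)}=\Vb^\top \sbb^{(\ell)}$, where $\Vb\in\RR{N}{d}$ stacks the values and $\sbb^{(\ell)}=(s_1^{(\ell)},\dots,s_N^{(\ell)})$. Then
\[
\|\mathbf{n}^{(\ell)}\|_2\le\|\Vb\|_2\,\|\sbb^{(\ell)}\|_2 .
\]
Since each entry of $\sbb^{(\ell)}$ lies in $[0,1]$ and the entries sum to at most $B$, we get $\|\sbb^{(\ell)}\|_2^2\le\|\sbb^{(\ell)}\|_1\le B$. Hence every centered summand $\mathbf{n}^{(\ell)}-\mathbf{n}_\tau$ has norm at most $2\|\Vb\|_2\sqrt{B}$. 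I would then apply a vector Hoeffding/McDiarmid bound in Hilbert space to the function $(\Wb^{(1)},\dots,\Wb^{(L)})\mapsto\|\widetilde{\mathbf{n}}-\mathbf{n}_\tau\|_2$. This function has bounded differences $c=2\|\Vb\|_2\sqrt B/L$, and its expectation is at most $\|\Vb\|_2\sqrt{B/L}$ by the i.i.d.\ variance computation. McDiarmid then gives, with probability $1-\delta_\mathrm{L}/2$,
\[
\|\widetilde{\mathbf{n}}-\mathbf{n}_\tau\|_2\le \|\Vb\|_2\sqrt{B/L}+2\|\Vb\|_2\sqrt{B}\sqrt{\log(2/\delta_\mathrm{L})/(2L)} .
\]
Since $\log(4/\delta_\mathrm{L})\ge\log 4>1$, the first term can be absorbed, and both terms fit under $2\|\Vb\|_2\sqrt{2B}\sqrt{\log(4/\delta_\mathrm{L})/L}$.

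If the constant bookkeeping in Step 2 does not close as cleanly as above, I would instead use the Pinelis vector Hoeffding inequality in Hilbert space, $\Pr(\|\sum_\ell X_\ell\|\ge t)\le 2\exp(-t^2/(2Lc^2))$ with $\|X_\ell\|\le c$, which directly produces a bound of the form $c\sqrt{2\log(4/\delta_\mathrm{L})/L}$ with $c=2\|\Vb\|_2\sqrt{B}$. A union bound over Steps 1 and 2 finishes the proof. The only delicate point is the $\|\sbb^{(\ell)}\|_2\le\sqrt B$ step, which is exactly where Assumption~\ref{as2} enters.
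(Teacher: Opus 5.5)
Your proposal is correct. Step 1 and the final union bound match the paper's proof exactly: the same Hoeffding threshold $t=B\sqrt{\log(4/\delta_\mathrm{L})/(2L)}$, the same use of the condition on $L$ to get $\widetilde{Z}\ge Z_{\tau,\min}/2$, and the same per-table bound $\|\mathbf{n}^{(\ell)}\|_2\le\|\mathbf{V}\|_2\sqrt{B}$ via $\|\mathbf{s}^{(\ell)}\|_2^2\le\|\mathbf{s}^{(\ell)}\|_1\le B$.

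You differ only in the primary argument for the vector numerator. The paper applies Pinelis's Hilbert-space Hoeffding inequality to the centered summands $\mathbf{n}^{(\ell)}-\mathbf{n}_\tau$, each of norm at most $2\|\mathbf{V}\|_2\sqrt{B}$. That gives $2\|\mathbf{V}\|_2\sqrt{2B}\sqrt{\log(4/\delta_\mathrm{L})/L}$ in one line, and it is exactly your fallback.

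Your main route applies McDiarmid to $\|\widetilde{\mathbf{n}}-\mathbf{n}_\tau\|_2$. You bound its mean through $\mathbb{E}\|\widetilde{\mathbf{n}}-\mathbf{n}_\tau\|_2^2\le\mathbb{E}\|\mathbf{n}^{(1)}\|_2^2/L\le\|\mathbf{V}\|_2^2B/L$. The resulting bound $\|\mathbf{V}\|_2\sqrt{B/L}\,\bigl(1+\sqrt{2\log(2/\delta_\mathrm{L})}\bigr)$ does fit under the target. Each of its two terms is at most $\sqrt{2}\|\mathbf{V}\|_2\sqrt{B}\sqrt{\log(4/\delta_\mathrm{L})/L}$, because $\log(4/\delta_\mathrm{L})\ge 1/2$.

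What the two routes buy: yours is self-contained with only scalar tools. It needs only a one-sided tail, hence $\log(2/\delta_\mathrm{L})$, but pays with an additive mean term that must be absorbed. The Pinelis route is shorter and yields the stated constant directly, but relies on a martingale inequality in smooth Banach spaces.

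Incidentally, the paper's displayed general form of the vector Hoeffding bound omits the number of summands in the exponent; it should read $2\exp(-t^2/(2KR^2))$. Your statement of it is the correct one. The paper's subsequent application to the average $\frac{1}{L}\sum_\ell\mathbf{X}_\ell$ is consistent with the correct form.
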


\begin{proof}
Let $\mathbf{V}\in\mathbb{R}^{N\times d}$ be the value matrix such that $\mathbf{V}^\top:= [\mathbf{v}_1, \dots \mathbf{v}_N] \in \mathbb{R}^{d \times N}$. For each table $\ell \in[L]$, define the score vector $\mathbf{s}^{(\ell)}(\qb):=(s_1^{(\ell)}(\qb), \ldots, s_N^{(\ell)}(\qb))^{\top} \in \mathbb{R}^N$ and $\|\mathbf{V}\|_2$ as the spectral norm of $\mathbf{V}$. From eq.~\eqref{eq:A2}, we already know $0\le Z^{(\ell)}(\qb)\le B$ for every $\ell\in [L]$. Next, we define the per-table numerator vector $\mathbf{n}^{(\ell)}(\qb):=\sum_{j=1}^N s_j^{(\ell)}(\qb) \vb_j=\mathbf{V}^\top \mathbf{s}^{(\ell)}(\qb)\in \mathbb{R}^d$. From the notations we already defined in the beginning of Section~\ref{sec:int}, we have
\begin{flalign}
&\widetilde{\mathbf{n}}(\qb)=\sum_{j=1}^N \widetilde{w}_j \vb_j=\frac{1}{L} \sum_{\ell=1}^L \sum_{j=1}^N s_j^{(\ell)}(\qb) \vb_j=\frac{1}{L} \sum_{\ell=1}^L \mathbf{n}^{(\ell)}(\qb)\in \mathbb{R}^d\\
&\widetilde{Z}=\sum_{j=1}^N \widetilde{w}_j=\sum_{j=1}^N\left(\frac{1}{L} \sum_{\ell=1}^L s_j^{(\ell)}(\qb)\right)=\frac{1}{L} \sum_{\ell=1}^L Z^{(\ell)}(\qb).
\end{flalign}

\paragraph{Bounding $\|\mathbf{n}^{(\ell)}(\qb)\|_2$:} Since $0\le s_j^{(\ell)}(\qb)\le 1$ for all $j\in [N]$, we have
\begin{flalign}
\|\mathbf{s}^{(\ell)}(\qb)\|_2^2=\sum_{j=1}^N (s_j^{(\ell)}(\qb))^2 \leq \sum_{j=1}^N s_j^{(\ell)}(\qb)=Z^{(\ell)}(\qb) \leq B.\label{eq:sq}
\end{flalign}
Now, combining eq.~\eqref{eq:sq} and submultiplicativity of the spectral norm, we have
\begin{flalign}
\|\mathbf{n}^{(\ell)}(\qb)\|_2=\|\mathbf{V}^\ts \mathbf{s}^{(\ell)}(\qb)\|_2 \leq\|\mathbf{V}\|_2\|\mathbf{s}^{(\ell)}(\qb)\|_2 \leq\|\mathbf{V}\|_2 \sqrt{B}.\label{eq:nq}
\end{flalign}

\paragraph{Bounding $\widetilde{Z}$:} From eq.~\eqref{eq:A2}, each $Z^{(\ell)}(\qb) \in[0, B]$, and the $Z^{(\ell)}$ are i.i.d. across $\ell$ with
\begin{flalign}
\mathbb{E}(\widetilde{Z})=&\mathbb{E}\left(\frac{1}{L} \sum_{\ell=1}^L Z^{(\ell)}(\qb)\right)= \mathbb{E}(Z^{(\ell)}(\qb))\nonumber\\
=&  \mathbb{E}\left(\sum_{j=1}^N s_j^{(\ell)}(\qb)\right)=\sum_{j=1}^N \mathbb{E}\left(s_j^{(\ell)}(\qb)\right)\nonumber\\
=& \sum_{j=1}^N w_{\tau, j} = Z_\tau.
\end{flalign}
Therefore, applying Hoeffding inequality with $t>0$, we get
$
\mathbb{P}\left(|\widetilde{Z}-Z_\tau | \geq t\right) \leq 2 \exp \left(-\frac{2 L t^2}{B^2}\right)$.
Taking $t=B \sqrt{\frac{\log (4 / \delta_\mathrm{L})}{2 L}}$, we have
\begin{flalign}
\mathbb{P}\left(|\widetilde{Z}-Z_\tau| \leq B \sqrt{\frac{\log (4 / \delta_\mathrm{L})}{2 L}}\right) \geq 1-\delta_\mathrm{L} / 2.\label{eq:hoef}
\end{flalign}
Now, if $B \sqrt{\frac{\log (4 / \delta_\mathrm{L})}{2 L}} \leq \frac{Z_{\tau, \min }}{2}$ or $L\ge\frac{2B^2\log (4/\delta_\mathrm{L})}{Z_{\tau, \min }^2}$ then combining Assumption~\ref{as1} with eq.~\eqref{eq:hoef} further boils down to
\begin{flalign}
\mathbb{P}\left(\widetilde{Z}\ge  \frac{Z_{\tau, \min }}{2}\right) \geq 1-\delta_\mathrm{L} / 2.\label{eq:st}
\end{flalign}

\paragraph{Bounding $\widetilde{\mathbf{n}}(\qb)$:} Let $\mathbf{X}_{\ell}:=\mathbf{n}^{(\ell)}(\qb)-\mathbf{n}_{\tau}(\qb) \in \mathbb{R}^d$. Then $\mathbb{E}\left[\mathbf{X}_{\ell}\right]=0$ and the $\mathbf{X}_{\ell}$ are i.i.d. Using eq.~\eqref{eq:nq} and Jensen's inequality, we have
\begin{flalign}
\|\mathbf{n}_{\tau}(\qb)\|_2=\|\mathbb{E}(\mathbf{n}^{(\ell)}(\qb))\|_2 \leq \mathbb{E}\|\mathbf{n}^{(\ell)}(\qb)\|_2 \leq\|\mathbf{V}\|_2 \sqrt{B}.\label{eq:ntq}
\end{flalign}
So, applying triangle inequality on the definition of $\mathbf{X}_{\ell}$ and combining eqs.~\eqref{eq:nq} \& \eqref{eq:ntq} yield
\begin{flalign}
\|\mathbf{X}_{\ell}\|_2 \leq\|\mathbf{n}^{(\ell)}(\qb)\|_2+\|\mathbf{n}_{\tau}(\qb)\|_2 \leq 2\|\mathbf{V}\|_2 \sqrt{B} .
\end{flalign}
Next, we apply a vector Hoeffding inequality for sums of bounded mean-zero random vectors in $\mathbb{R}^d$. Specifically, if $\mathbf{X}_1, \ldots, \mathbf{X}_K$ are independent with $\mathbb{E}\left[\mathbf{X}_{k}\right]=0$ and $\left\|\mathbf{X}_{k}\right\|_2 \leq R$, then
$$
\mathbb{P}\left(\left\|\sum_{k=1}^K \mathbf{X}_{k}\right\|_2 \geq t\right) \leq 2 \exp \left(-\frac{t^2}{2 R^2}\right),
$$
which follows from \emph{e.g.,} Theorem~3 in \cite{pinelis1992approach} and related vector concentration results (e.g., \cite{BLM13}). Applying this in our context directly yields
$$
\mathbb{P}\left(\left\|\widetilde{\mathbf{n}}-\mathbf{n}_{\tau}\right\|_2 \geq t\right) \leq 2 \exp \left(-\frac{L t^2}{8\|\mathbf{V}\|_2^2 B}\right).
$$
Setting $t=2\|\mathbf{V}\|_2 \sqrt{2 B} \sqrt{\frac{\log (4 / \delta_\mathrm{L})}{L}}$ above yields:
\begin{flalign}
\mathbb{P}\left(\left\|\widetilde{\mathbf{n}}-\mathbf{n}_{\tau}\right\|_2 \leq 2\|\mathbf{V}\|_2 \sqrt{2 B} \sqrt{\frac{\log (4 / \delta_\mathrm{L})}{L}}\right) \geq 1-\delta_\mathrm{L} / 2.\label{eq:hvec}
\end{flalign}
\\
\noindent
Finally, taking the union bound over eqs.~\eqref{eq:hoef} and \eqref{eq:hvec} gives the stated joint event with probability at least $1-\delta_\mathrm{L}$, and under $L\ge\frac{2B^2\log (4/\delta_\mathrm{L})}{Z_{\tau, \min }^2}$ we also have $\widetilde{Z}\ge \frac{Z_{\tau,\min}}{2}$ by eq.~\eqref{eq:st}$.$
\end{proof}

\begin{lemma}[Final bound for $\yb_{\tau,L}(\qb)$]\label{lem:Llem}
If $L\ge\frac{2B^2\log (4/\delta_\mathrm{L})}{Z_{\tau, \min }^2}$, then
\[
\mathbb{P}\Big(\left\|\yb_{\tau, L}(\qb)-\yb_\tau(\qb)\right\|_2 \leq C\sqrt{\frac{\log (4 / \delta_\mathrm{L})}{L}}\|\mathbf{V}\|_2\Big)\ge 1-\delta_\mathrm{L},
\]
where $C=\left(\frac{4 \sqrt{2 B}}{Z_{\tau, \min }}+\frac{\sqrt{2} B^{3 / 2}}{Z_{\tau, \min }^2}\right)$.
\end{lemma}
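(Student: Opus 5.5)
We work on the event of Lemma~\ref{lem:zhat}, which holds with probability at least $1-\delta_\mathrm{L}$ under the stated condition on $L$. On this event we have three facts: $|\widetilde Z-Z_\tau|\le B\sqrt{\log(4/\delta_\mathrm{L})/(2L)}$, $\|\widetilde{\mathbf n}-\mathbf n_\tau\|_2\le 2\|\mathbf V\|_2\sqrt{2B}\sqrt{\log(4/\delta_\mathrm{L})/L}$, and $\widetilde Z\ge Z_{\tau,\min}/2$. The argument is then a deterministic ratio perturbation bound, since $\yb_{\tau,L}=\widetilde{\mathbf n}/\widetilde Z$ and $\yb_\tau=\mathbf n_\tau/Z_\tau$.

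Add and subtract $\mathbf n_\tau/\widetilde Z$:
\[
\yb_{\tau,L}-\yb_\tau=\frac{\widetilde{\mathbf n}-\mathbf n_\tau}{\widetilde Z}+\mathbf n_\tau\Big(\frac{1}{\widetilde Z}-\frac{1}{Z_\tau}\Big)
=\frac{\widetilde{\mathbf n}-\mathbf n_\tau}{\widetilde Z}+\frac{\mathbf n_\tau\,(Z_\tau-\widetilde Z)}{\widetilde Z\,Z_\tau}.
\]

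For the first term, use $\widetilde Z\ge Z_{\tau,\min}/2$ and the numerator bound. This gives $\frac{4\sqrt{2B}}{Z_{\tau,\min}}\|\mathbf V\|_2\sqrt{\log(4/\delta_\mathrm{L})/L}$, which is exactly the first summand of $C$.

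For the second term, use three inputs:
\begin{itemize}
\item $\|\mathbf n_\tau\|_2\le\|\mathbf V\|_2\sqrt B$, from eq.~\eqref{eq:ntq};
\item $\widetilde Z\ge Z_{\tau,\min}/2$;
\item $Z_\tau\ge Z_{\tau,\min}$, from Assumption~\ref{as1}.
\end{itemize}
Combined with the Hoeffding deviation of $\widetilde Z$, this yields
\[
\frac{2\sqrt B\,\|\mathbf V\|_2}{Z_{\tau,\min}^2}\cdot B\sqrt{\frac{\log(4/\delta_\mathrm{L})}{2L}}=\frac{\sqrt2\,B^{3/2}}{Z_{\tau,\min}^2}\|\mathbf V\|_2\sqrt{\frac{\log(4/\delta_\mathrm{L})}{L}},
\]
where we used $2/\sqrt2=\sqrt2$. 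This matches the second summand of $C$. The triangle inequality then gives the claim on the same event, so the probability is at least $1-\delta_\mathrm{L}$.

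No real obstacle is expected; the only care needed is to ensure the lower bound on $\widetilde Z$ is available on the same event as the two deviation bounds. Lemma~\ref{lem:zhat} already packages all three on a single event of probability $1-\delta_\mathrm{L}$, so no further union bound is needed.
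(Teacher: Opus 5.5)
Your proof is correct and matches the paper's argument essentially line for line. You use the same add-and-subtract of $\mathbf{n}_\tau/\widetilde Z$, the same three ingredients ($\widetilde Z\ge Z_{\tau,\min}/2$, $Z_\tau\ge Z_{\tau,\min}$, $\|\mathbf{n}_\tau\|_2\le\|\mathbf{V}\|_2\sqrt{B}$), and the single joint event of probability at least $1-\delta_\mathrm{L}$ supplied by Lemma~\ref{lem:zhat}, on which the lower bound on $\widetilde Z$ already follows from the deviation bound for $\widetilde Z$ under the stated condition on $L$.
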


\begin{proof}
Using the definitions of $\yb_{\tau, L}(\qb)$ and $\yb_\tau(\qb)$, and adding and subtracting $\mathbf{n}_{\tau} / \widetilde{Z}$, we have
\begin{flalign}
\yb_{\tau, L}(\qb)-\yb_\tau(\qb)=\frac{\widetilde{\mathbf{n}}}{\widetilde{Z}}-\frac{\mathbf{n}_{\tau}}{Z_\tau}=\frac{\widetilde{\mathbf{n}}-\mathbf{n}_{\tau}}{\widetilde{Z}}+\mathbf{n}_{\tau}\left(\frac{1}{\widetilde{Z}}-\frac{1}{Z_\tau}\right).
\end{flalign}
Taking norms both sides and applying triangle inequality yield
\begin{flalign}
\|\yb_{\tau, L}(\qb)-\yb_\tau(\qb)\|_2\le \frac{\|\widetilde{\mathbf{n}}-\mathbf{n}_{\tau}\|_2}{\widetilde{Z}} + \frac{\|\mathbf{n}_{\tau}\|_2}{\widetilde{Z} Z_\tau}|\widetilde{Z}-Z_\tau|.
\end{flalign}
Now using Assumption~\ref{as1} \emph{i.e.,} $Z_\tau \geq Z_{\tau, \min }$, $\widetilde{Z} \geq Z_{\tau, \min } / 2$ from Lemma~\ref{lem:zhat}, and $\left\|\mathbf{n}_{\tau}\right\|_2 \leq\|\mathbf{V}\|_2 \sqrt{B}$ in eq.~\eqref{eq:ntq}$:$ 
\begin{flalign}
\left\|\yb_{\tau, L}(\qb)-\yb_\tau(\qb)\right\|_2 \leq \frac{2}{Z_{\tau, \min }}\left\|\widetilde{\mathbf{n}}-\mathbf{n}_{\tau}\right\|_2+\frac{2\|\mathbf{V}\|_2 \sqrt{B}}{Z_{\tau, \min }^2}\left|\widetilde{Z}-Z_\tau\right|.
\end{flalign}
\\
\noindent
Finally, applying the bounds
\[
\left\|\widetilde{\mathbf{n}}-\mathbf{n}_{\tau}\right\|_2 \leq 2\|\mathbf{V}\|_2 \sqrt{2 B} \sqrt{\frac{\log (4 / \delta_\mathrm{L})}{L}}
\quad\text{and}\quad
\left|\widetilde{Z}-Z_\tau\right| \leq B \sqrt{\frac{\log (4 / \delta_\mathrm{L})}{2 L}}
\]
(which hold jointly with probability at least $1-\delta_\mathrm{L}$ by Lemma~\ref{lem:zhat}), we get
\begin{flalign}
\left\|\yb_{\tau, L}(\qb)-\yb_\tau(\qb)\right\|_2 \leq& \frac{2}{Z_{\tau, \min }}\left(2\|\mathbf{V}\|_2 \sqrt{2 B} \sqrt{\frac{\log (4 / \delta_\mathrm{L})}{L}}\right)+\frac{2\|\mathbf{V}\|_2 \sqrt{B}}{Z_{\tau, \min }^2}\left(B \sqrt{\frac{\log (4 / \delta_\mathrm{L})}{2 L}}\right)\nonumber\\
=& \left(\frac{4 \sqrt{2 B}}{Z_{\tau, \min }}+\frac{\sqrt{2} B^{3 / 2}}{Z_{\tau, \min }^2}\right) \sqrt{\frac{\log (4 / \delta_\mathrm{L})}{L}}\|\mathbf{V}\|_2.
\end{flalign}
This concludes the proof.
\end{proof}
%%%%%%%%%%%%%%%%%%%%%%%%%%%%%%%%%%%%%%%%%%%%%%%%%%
\noindent
Conditioned on the tables, define $S_1:=\sum_{j=1}^N \widetilde a_j\|\vb_j\|_2$ and sampling
probabilities $p_j:=\widetilde a_j\|\vb_j\|_2/S_1$. Draw i.i.d. indices
$J_1,\dots,J_M\sim p$ and define
\[
\mathbf{T}(\qb):=\frac{1}{M}\sum_{m=1}^M \frac{\widetilde a_{J_m}}{p_{J_m}}\,\mathbf{v}_{J_m}\in\mathbb{R}^d.
\]
\noindent
Let $\Wcal=\{\mathbf{W}_1,\dots,\mathbf{W}_L\}$ be the set of random Gaussian hyperplanes with $\mathbf{W}_\ell\in\RR{P}{d}$. Now are ready to present the results for sampling.

\begin{lemma}\label{lem:sampling}
The following properties hold:
\begin{enumerate}
    \item $\mathbb{E}[\mathbf{T}(\qb) \mid \mathcal{W}]=\yb_{\tau, L}(\qb)$
    \item $\PP\left(\left\|\mathbf{T}(\qb)-\yb_{\tau, L}(\qb)\right\|_2 \leq \|\mathbf{V}\|_2 \sqrt{\frac{8 \log \left(2 / \delta_\mathrm{M}\right)}{M}}\right)\ge 1-\delta_\mathrm{M}$
\end{enumerate}
\end{lemma}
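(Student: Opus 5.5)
Throughout, I condition on the hash tables $\mathcal{W}$. Once the tables are fixed, the weights $\widetilde a_j$, the sampling probabilities $p_j=\widetilde a_j\|\vb_j\|_2/S_1$, and $\yb_{\tau,L}(\qb)=\sum_j \widetilde a_j\vb_j$ are all deterministic. The only remaining randomness is in the i.i.d. indices $J_1,\dots,J_M\sim p$. Both claims are therefore statements about an importance-sampling Monte Carlo average, and the unconditional probability bound in part 2 will follow by averaging the conditional bound over $\mathcal{W}$.

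\textbf{Part 1 (unbiasedness).} Indices with $p_j=0$ have either $\widetilde a_j=0$ or $\vb_j=0$, so they contribute nothing to $\yb_{\tau,L}(\qb)$. For a single draw,
\[
\mathbb{E}\Big[\tfrac{\widetilde a_{J}}{p_{J}}\vb_{J}\,\Big|\,\mathcal{W}\Big]=\sum_{j:p_j>0} p_j\,\tfrac{\widetilde a_j}{p_j}\vb_j=\yb_{\tau,L}(\qb).
\]
Linearity over the $M$ draws then gives part 1.

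\textbf{Part 2 (concentration).} Define $\mathbf{Y}_m:=\frac{\widetilde a_{J_m}}{p_{J_m}}\vb_{J_m}-\yb_{\tau,L}(\qb)$. These are i.i.d. and mean zero given $\mathcal{W}$, and $\mathbf{T}(\qb)-\yb_{\tau,L}(\qb)=\frac1M\sum_m \mathbf{Y}_m$. The key observation is that value-aware sampling makes every summand have the same norm:
\[
\Big\|\tfrac{\widetilde a_j}{p_j}\vb_j\Big\|_2=S_1=\sum_j\widetilde a_j\|\vb_j\|_2\le\max_j\|\vb_j\|_2\le\|\mathbf{V}\|_2 .
\]
The middle step holds because $\widetilde a$ is a probability vector. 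The last step holds because each row norm of $\mathbf{V}$ is at most its spectral norm. Similarly, $\|\yb_{\tau,L}(\qb)\|_2\le\sum_j\widetilde a_j\|\vb_j\|_2\le\|\mathbf{V}\|_2$, so $\|\mathbf{Y}_m\|_2\le 2\|\mathbf{V}\|_2$.

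I then apply the same vector Hoeffding inequality (Pinelis) used in Lemma~\ref{lem:zhat}, with $R=2\|\mathbf{V}\|_2/M$ for the rescaled summands $\mathbf{Y}_m/M$. Since $\sum_m R^2=4\|\mathbf{V}\|_2^2/M$, this gives
\[
\mathbb{P}\Big(\|\mathbf{T}(\qb)-\yb_{\tau,L}(\qb)\|_2\ge t\,\Big|\,\mathcal{W}\Big)\le 2\exp\!\big(-Mt^2/(8\|\mathbf{V}\|_2^2)\big).
\]
Setting $t=\|\mathbf{V}\|_2\sqrt{8\log(2/\delta_\mathrm{M})/M}$ makes the right-hand side $\delta_\mathrm{M}$. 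This conditional bound holds for every realization of $\mathcal{W}$, so taking expectation over $\mathcal{W}$ yields the unconditional statement.

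\textbf{Main obstacle.} The delicate point is the form of the vector Hoeffding bound. It must be the version for a sum of independent summands each bounded by $R_m$, with exponent $t^2/(2\sum_m R_m^2)$, in order to recover the $1/M$ rate. The statement quoted in Lemma~\ref{lem:zhat}, $2\exp(-t^2/(2R^2))$, should be read with $R^2$ replaced by $\sum_m R_m^2$; that lemma implicitly used it this way for the average. If one prefers to avoid Pinelis altogether, a fallback is available: the variance bound $\mathbb{E}\|\frac1M\sum_m\mathbf{Y}_m\|_2^2\le 4\|\mathbf{V}\|_2^2/M$ combined with McDiarmid's inequality (bounded differences $4\|\mathbf{V}\|_2/M$) gives the same rate with slightly different constants.
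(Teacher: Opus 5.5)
Your proposal is correct and follows essentially the same route as the paper. The paper also conditions on $\mathcal{W}$, checks unbiasedness of the importance-weighted draw, uses the equal-norm property $\|\widetilde a_j\mathbf{v}_j/p_j\|_2=S_1\le\|\mathbf{V}\|_2$ to get $\|\mathbf{Y}_m\|_2\le 2\|\mathbf{V}\|_2$, applies the vector (Pinelis) Hoeffding bound, and averages the conditional bound over $\mathcal{W}$. Your remark that the quoted inequality must be read with $R^2$ replaced by $\sum_m R_m^2$ is right, and that reading is exactly what gives the paper's exponent $Mt^2/(8\|\mathbf{V}\|_2^2)$.
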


\begin{proof}

\textbf{Proof of Part 1.}
\noindent
Let $J_1,\dots,J_M$ be the $M$ independent samples of $\{1,\dots,N\}$ drawn using the probability distribution $p:=(p_1,\dots,p_N)$. Define $\mathbf{X}_m:=\frac{\widetilde{a}_{J_m}}{p_{J_m}} \mathbf{v}_{J_m} \in \mathbb{R}^d$. Then
$$
\mathbf{T}(\qb)=\frac{1}{M} \sum_{m=1}^M \mathbf{X}_m
$$
Now, given $\mathcal{W}$. Then $\widetilde{a}_j$ and $p_j$ are fixed numbers, so:

$$
\mathbb{E}[\mathbf{X}_m \mid \mathcal{W}]=\sum_{j=1}^N p_j \frac{\widetilde{a}_j}{p_j} \vb_j=\sum_{j=1}^N \widetilde{a}_j \vb_j=\yb_{\tau, L}(\qb) .
$$
\noindent
Therefore,
$$
\mathbb{E}[\mathbf{T}(\qb) \mid \mathcal{W}]=\frac{1}{M} \sum_{m=1}^M \mathbb{E}\left[\mathbf{X}_m \mid \mathcal{W}\right]=\yb_{\tau, L}(\qb)
$$

\noindent
\textbf{Proof of Part 2.}
\noindent
We define the centered random vector
$\Yb_m:=\mathbf{X}_m-\mathbb{E}\left[\mathbf{X}_m \mid \mathcal{W}\right]=\mathbf{X}_m-\yb_{\tau, L}(\qb)$. Then $\mathbb{E}\left[\Yb_m \mid \mathcal{W}\right]=0$ and $\mathbf{T}(\qb)-\yb_{\tau, L}(\qb)=\frac{1}{M} \sum_{m=1}^M \mathbf{Y}_m$. Now, using $p_{J_m}:=\frac{\widetilde a_{J_m}\|\mathbf{v}_{J_m}\|_2}{S_1}$, we have

\begin{flalign}
\|\mathbf{X}_m\|_2=\left\|\frac{\widetilde{a}_{J_m}}{p_{J_m}} \mathbf{v}_{J_m}\right\|_2= \frac{\widetilde{a}_{J_m}}{\nicefrac{\widetilde a_{J_m}\|\mathbf{v}_{J_m}\|_2}{S_1}}\|\mathbf{v}_{J_m}\|_2=S_1
\end{flalign}
\\
\noindent
and also applying triangle inequality

\begin{flalign}
\|\yb_{\tau, L}(\qb)\|_2=\Big\|\sum_{j=1}^N \widetilde{a}_j \vb_j\Big\|_2\le \sum_{j=1}^N \widetilde{a}_j\|\vb_j\|_2=S_1.
\end{flalign}
\\
\noindent
Therefore, applying the triangle inequality on $\mathbf{Y}_m$, we have
\begin{flalign}
\|\mathbf{Y}_m\|_2\le \|\mathbf{X}_m\|_2+\|\yb_{\tau, L}(\qb)\|_2=2S_1\label{eq:2s1}
\end{flalign}
\\
\noindent
Now, note that we can always rewrite $\vb_j=\mathbf{V}^\ts \mathbf{e}_j$ where $\mathbf{e}_j\in\R{N}$ is the $j$-th standard basis vector and thus $\|\mathbf{e}_j\|_2=1$. Consequently, applying the submultiplicativity property, we trivially have 

\begin{flalign}
    \|\vb_j\|_2=\|\mathbf{V}^\ts \mathbf{e}_j\|_2 \le \|\mathbf{e}_j\|_2\|\mathbf{V}\|_2=\|\mathbf{V}\|_2\label{eq:vnorm}
\end{flalign}
\\
\noindent
Using the definition of $S_1$, eq.~\eqref{eq:vnorm} and the fact that $\sum_{j=1}^N \widetilde{a}_j$=1, we get
\begin{flalign}
S_1= \sum_{j=1}^N \widetilde{a}_j\|\vb_j\|_2\le \|\mathbf{V}\|_2
\end{flalign}
\\
\noindent
Therefore $\|\mathbf{Y}_m\|_2\le 2\|\mathbf{V}\|_2$. Now, using the fact that $\mathbf{T}(\qb)-\yb_{\tau, L}(\qb)=\frac{1}{M} \sum_{m=1}^M \mathbf{Y}_m
$ and applying the same vector Hoeffding bound we have

\begin{flalign}
\PP\left(\left\|\mathbf{T}(\qb)-\yb_{\tau, L}(\qb)\right\|_2 \geq t\mid\Wcal\right)=\PP\left(\Big\|\frac{1}{M} \sum_{m=1}^M \mathbf{Y}_m\Big\|_2 \geq t\mid\Wcal\right)\le 2 \exp \left(-\frac{M t^2}{8 \|\mathbf{V}\|_2^2}\right)\label{eq:hoeffvec}
\end{flalign}
\\
\noindent
Now, for a failure probability $0<\delta_\mathrm{M}<1$, we take $t=\sqrt{\frac{8\log (2/\delta_\mathrm{M})}{M}}\|\mathbf{V}\|_2$ and eq.~\eqref{eq:hoeffvec} further simplify to 
\begin{flalign}
 \PP\left(\left\|\mathbf{T}(\qb)-\yb_{\tau, L}(\qb)\right\|_2 \leq \sqrt{\frac{8\log (2/\delta_\mathrm{M})}{M}}\|\mathbf{V}\|_2\mid\Wcal\right)\ge 1-\delta_\mathrm{M}\label{eq:cond} 
\end{flalign}
\\
\noindent
Finally, since eq.~\eqref{eq:cond} holds for any $\Wcal$, we take the expectations over both sides with respect to $\Wcal$ and use the law of iterated expectation to get

\begin{flalign}
\PP\Big(\big\|\mathbf{T}(\qb)-\yb_{\tau, L}(\qb)\big\|_2 \leq \sqrt{\nicefrac{8\log (2/\delta_\mathrm{M})}{M}}\|\mathbf{V}\|_2\Big)\ge 1-\delta_\mathrm{M}    
\end{flalign}

\end{proof}

\begin{lemma}\label{lem:disc}
For any fixed query $\qb$, we have
\[
\left\|\yb_\tau(\qb)-\yb^*(\qb)\right\|_2
\leq
2 B\left(\frac{1}{Z_{\tau, \mathrm{min}}}+\frac{\sqrt{B}}{Z_{\mathrm{min}} Z_{\tau, \mathrm{min}}}\right)
\varepsilon_\tau(\qb)\,\|\mathbf{V}\|_2 .
\]
\end{lemma}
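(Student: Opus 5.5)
The plan is to compare the soft per-table score with the \emph{hard} SimHash collision indicator, whose expectation is exactly the angular kernel weight. Fix a table $\ell$ and let $h_j^{(\ell)}:=\mathbb{I}[b_j^{(\ell)}=b_q^{(\ell)}]$, where $b_q^{(\ell)}=\operatorname{sign}(\Wb^{(\ell)}\qb)$. I take as the first step the standard SRP fact: each of the $P$ independent Gaussian hyperplanes separates $\qb$ and $\kb_j$ with probability $\theta_j/\pi$, with $\theta_j$ their angle. Hence
\[
\mathbb{E}\,h_j^{(\ell)}=(1-\theta_j/\pi)^P=w_j .
\]
Writing $\mathbf{h}^{(\ell)}$ for the indicator vector, we therefore have $\mathbf{w}=\mathbb{E}\mathbf{h}^{(\ell)}$ and $\mathbf{n}^*:=\sum_j w_j\vb_j=Z\,\yb^*(\qb)$. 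On the soft side, $\mathbf{w}_\tau=\mathbb{E}\mathbf{s}^{(\ell)}$.

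The core step is a deterministic, per-table $\ell_1$ bound. Put $p:=p_\tau^{(\ell)}(b_q^{(\ell)}\mid\qb)$.
\begin{itemize}
\item If $b_j=b_q$, then $|s_j-h_j|=1-p$.
\item If $b_j=r\neq b_q$, then $|s_j-h_j|=p_\tau^{(\ell)}(r\mid\qb)$.
\end{itemize}
Grouping keys by bucket and using Assumption~\ref{as2} (at most $B$ keys per bucket), exactly as in eq.~\eqref{eq:A2}, gives
\[
\|\mathbf{s}^{(\ell)}-\mathbf{h}^{(\ell)}\|_1\le B(1-p)+B\sum_{r\neq b_q}p_\tau^{(\ell)}(r\mid\qb)=2B(1-p).
\]
Taking expectations over $\Wb^{(\ell)}$ and applying Jensen yields
\[
\|\mathbf{w}_\tau-\mathbf{w}\|_1\le 2B\,\varepsilon_\tau(\qb).
\]
This has two consequences:
\begin{itemize}
\item $|Z_\tau-Z|\le 2B\varepsilon_\tau$.
\item $\|\mathbf{n}_\tau-\mathbf{n}^*\|_2=\|\Vb^\top(\mathbf{w}_\tau-\mathbf{w})\|_2\le\|\Vb\|_2\|\mathbf{w}_\tau-\mathbf{w}\|_2\le 2B\varepsilon_\tau\|\Vb\|_2$, using $\|\cdot\|_2\le\|\cdot\|_1$.
\end{itemize}

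Next I would use the same add-and-subtract decomposition as in Lemma~\ref{lem:Llem}:
\[
\yb_\tau-\yb^*=\frac{\mathbf{n}_\tau-\mathbf{n}^*}{Z_\tau}+\mathbf{n}^*\Big(\frac{1}{Z_\tau}-\frac{1}{Z}\Big),
\]
so that
\[
\|\yb_\tau-\yb^*\|_2\le\frac{2B\varepsilon_\tau\|\Vb\|_2}{Z_{\tau,\min}}+\frac{\|\mathbf{n}^*\|_2\,2B\varepsilon_\tau}{Z_{\min}Z_{\tau,\min}},
\]
where Assumption~\ref{as1} handles both denominators. It remains to bound $\|\mathbf{n}^*\|_2\le\sqrt{B}\|\Vb\|_2$, which I would do as in eq.~\eqref{eq:ntq}. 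Since $h_j\in\{0,1\}$ and at most $B$ keys share the query's bucket, $\|\mathbf{h}^{(\ell)}\|_2\le\sqrt{B}$. Jensen then gives
\[
\|\mathbf{n}^*\|_2\le\mathbb{E}\|\Vb^\top\mathbf{h}^{(\ell)}\|_2\le\|\Vb\|_2\sqrt{B}.
\]
Substituting recovers exactly the constant $2B\big(\tfrac{1}{Z_{\tau,\min}}+\tfrac{\sqrt B}{Z_{\min}Z_{\tau,\min}}\big)$.

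The main obstacle is the $\ell_1$ bound in the core step. One must notice that the soft mass leaking to other buckets is counted at most $B$ times per bucket, so both the missing mass in the query's bucket and the leaked mass are controlled by the single quantity $1-p$. One must also confirm that the hard-indicator expectation equals $w_j$ exactly, which relies on independent rows of $\Wb^{(\ell)}$.
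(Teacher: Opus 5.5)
Your proposal is correct and follows essentially the same route as the paper: the SimHash identity $\EE[I_j]=w_j$, the deterministic per-table bound $\sum_j|s_j^{(\ell)}(\qb)-I_j|\le 2B\bigl(1-p_\tau^{(\ell)}(b_q\mid\qb)\bigr)$ from grouping keys by bucket, and hence $\|\wb_\tau-\wb\|_1\le 2B\varepsilon_\tau(\qb)$ and $|Z-Z_\tau|\le 2B\varepsilon_\tau(\qb)$. The only cosmetic difference is the order of steps. You run the add-and-subtract decomposition on the numerators $\mathbf{n}_\tau,\mathbf{n}^*$ and bound $\|\mathbf{n}^*\|_2\le\sqrt{B}\,\|\Vb\|_2$ via Jensen on the indicator vector, whereas the paper first pulls out $\|\Vb\|_2$, decomposes $\ab_\tau-\ab$, and uses $\|\wb\|_2^2\le Z\le B$; both give exactly the same constant.
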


\begin{proof}
From the definition of $\yb_\tau(\qb)$ and $\yb^*(\qb)$ we have
\begin{flalign}
\|\yb_\tau(\qb)-\yb^*(\qb)\|_2=\big\|\sum_{j=1}^N\left(a_{\tau, j}-a_j\big) \vb_j\right\|_2= \left\|\mathbf{V}^\ts\left(\ab_{\tau}-\ab\right)\right\|_2\le\|\mathbf{V}\|_2\|\ab_{\tau}-\ab\|_2\,,\label{eq:mis}
\end{flalign}
where $\ab_{\tau}$ and $\ab$ are the vectors of order $N$ whose $j$-th entries are $a_{\tau, j}$ and $a_j$ respectively. We applied the submultiplicativity property of vector and matrix norms in the last inequality of eq.~\eqref{eq:mis}. So it remains to bound $\left\|\ab_\tau-\ab\right\|_2$. Now, from the definitions of $a_{\tau, j}$ and $a_j$ in the beginning of Section~\ref{sec:int}, we rewrite $(\ab_\tau-\ab)$ as
\begin{flalign}
\ab_\tau-\ab=\frac{\wb_\tau}{Z_\tau}-\frac{\wb}{Z}
=\frac{\wb_\tau-\wb}{Z_\tau}+\frac{Z-Z_\tau}{Z_\tau Z}\,\wb\,,\label{eq:attker}
\end{flalign}
where $\wb_\tau$ and $\wb$ are the unnormalized attention vectors of dimension $N$ whose $j$-th entries are $w_{\tau,j}$ and $w_j$ respectively. The last equality follows from adding and subtracting $\nicefrac{\wb} {Z_\tau}$ from the second equality and then simplifying. Now, taking $\ell_2$-norm on the both sides of eq.~\eqref{eq:attker} and applying triangle inequality, we further have
\begin{flalign}
\|\ab_{\tau}-\ab\|_2\le&\frac{\|\wb_\tau-\wb\|_2}{Z_\tau}+\frac{|Z-Z_\tau|}{Z_\tau Z}\,\|\wb\|_2\nonumber\\
\le& \frac{\|\wb_\tau-\wb\|_2}{Z_{\tau ,\text{min}}}+\frac{|Z-Z_\tau|}{Z_{\tau ,\text{min}}\, Z_{\text{min}}}\,\|\wb\|_2\,,\label{eq:abbo}
\end{flalign}
where the last inequality in eq.~\eqref{eq:abbo} follows directly from Assumption~\ref{as1}.
\\
\noindent
Now, we rewrite $w_j$ as an SRP collision probability.
Fix a single table with $\Wb^{(\ell)}\sim\mathcal{N}(0,1)^{P\times d}$.
Define the hard hash (bucket id) of any $\xb\in\R{d}$ by
\[
h(\xb):=\mathrm{sign}(\Wb^{(\ell)}\xb)\in\{\pm 1\}^P
\quad\text{(equivalently an index in $[R]$)}.
\]
Then $b_j^{(\ell)}$ is precisely the index of $h(\kb_j)$.
Let $b_q:=h(\qb)$ denote the query bucket under the same table.
Define the hard collision indicator
$I_j:=\mathbf{1}\{b_j^{(\ell)}=b_q\}\in\{0,1\}.$
By the standard SRP/SimHash collision identity, we rewrite
\[
w_j
=\Big(1-\frac{1}{\pi}\cos^{-1}\Big(\frac{\qb^\top \kb_j}{\|\qb\|\,\|\kb_j\|}\Big)\Big)^P
=
\mathbb{E}[I_j],
\]
where the expectation is over $\Wb^{(\ell)}$. We now express $w_{\tau,j}$ as the soft score expectation and define the peaking quantity.
Recall $w_{\tau,j}:=\EE[s_j^{(\ell)}(\qb)]$ where
$s_j^{(\ell)}(\qb)=p_\tau^{(\ell)}(b_j^{(\ell)}\mid \qb)\in[0,1].$
For the same table, define the (random) dominant query bucket
\[
b_\star:=\arg\max_{r\in[R]} p_\tau^{(\ell)}(r\mid \qb).
\]
Since $p_\tau^{(\ell)}(r\mid \qb)\propto \exp(\ub^{(\ell)}(\qb)^\top \mathbf{c}_r/\tau)$,
$\ub^{(\ell)}(\qb)=\frac{1}{\sqrt d}\tanh(\Wb^{(\ell)}\qb)$, and $\tanh (\cdot)$ is strictly increasing, we have $\operatorname{sign}\left(\ub^{(\ell)}(\qb)\right)=\operatorname{sign}\left(\Wb^{(\ell)} \qb\right)$  coordinatewise.
Hence the dominant query bucket under soft collision and the hard bucket for $\qb$ coincides \ie, $b_\star=b_q$.
Now, we define the non-dominant mass quantity
\[
\varepsilon_\tau(\qb)
:=
\EE\big[1-p_\tau^{(\ell)}(b_q\mid \qb)\big]
=
\EE\big[1-p_\tau^{(\ell)}(b_\star\mid \qb)\big]\,,
\]
\\
\noindent
where the expectation is over the randomness of the table $\Wb^{(\ell)}$. In fact, $\varepsilon_\tau(\qb)$ quantifies how much probability mass the soft bucket distribution assigns to buckets other than the hard SRP bucket,
and it controls the discrepancy between soft-count weights and hard collision probabilities. 
\\
\noindent
Now, we are ready to bound $\|\wb_\tau-\wb\|_1$ and $|Z-Z_\tau|$ using occupancy $B$.
We first prove a \emph{per-table} inequality. For a fixed realization of $\Wb^{(\ell)}$,
we group indices by buckets and use Assumption~\ref{as2}.
Write
\begin{flalign}
\sum_{j=1}^N |s_j^{(\ell)}(\qb)-I_j|
=
\sum_{b_j^{(\ell)}=b_q}\!\!\big|p_\tau^{(\ell)}(b_q\mid\qb)-1\big|
+
\sum_{b_j^{(\ell)}\neq b_q}\!\!p_\tau^{(\ell)}(b_j^{(\ell)}\mid\qb).\label{eq:bound}
\end{flalign}
For the first term, there are at most $B$ indices in bucket $b_q$, hence
\begin{flalign}
\sum_{b_j^{(\ell)}=b_q}\!\!\big|p_\tau^{(\ell)}(b_q\mid\qb)-1\big|
=
\Big(\sum_{b_j^{(\ell)}=b_q}1\Big)\big(1-p_\tau^{(\ell)}(b_q\mid\qb)\big)
\le B\big(1-p_\tau^{(\ell)}(b_q\mid\qb)\big).\label{eq:bound1}
\end{flalign}
For the second term, we regroup by buckets:
\begin{flalign}
\sum_{b_j^{(\ell)}\neq b_q} p_\tau^{(\ell)}(b_j^{(\ell)}\mid\qb)
=
\sum_{r\neq b_q}\ \sum_{b_j^{(\ell)}=r} p_\tau^{(\ell)}(r\mid\qb)
\le
\sum_{r\neq b_q} B\,p_\tau^{(\ell)}(r\mid\qb)
=
B\big(1-p_\tau^{(\ell)}(b_q\mid\qb)\big).\label{eq:bound2}
\end{flalign}
In the above, the first equality in eq.~\eqref{eq:bound2} utilizes the fact that each bucket $r \in[R]$, all keys $j$ with $b_j^{(\ell)}=r$ contribute the same probability value $p_\tau^{(\ell)}(r \mid \qb)$ (because the probability depends only on the bucket $r$, not on $j$ ). Then, inside the inner sum, the term $p_\tau^{(\ell)}(r \mid \qb)$ is constant w.r.t. $j$ and the inequality follows directly from Assumption~\ref{as2}.
\\
\noindent
Combining eqs.~\eqref{eq:bound}, \eqref{eq:bound1} and \eqref{eq:bound2} yield the deterministic bound
\begin{flalign}
\sum_{j=1}^N |s_j^{(\ell)}(\qb)-I_j|
\le
2B\big(1-p_\tau^{(\ell)}(b_q\mid\qb)\big).\label{eq:soft-hard-pertable}
\end{flalign}
\\
\noindent
Now take expectation over $\Wb^{(\ell)}$ and use $w_{\tau,j}=\EE[s_j^{(\ell)}(\qb)]$ and $w_j=\EE[I_j]$:
\begin{flalign}
\|\wb_\tau-\wb\|_1
&=\sum_{j=1}^N |w_{\tau,j}-w_j|
=\sum_{j=1}^N \big|\EE[s_j^{(\ell)}(\qb)-I_j]\big|
\le \sum_{j=1}^N \EE|s_j^{(\ell)}(\qb)-I_j| \nonumber\\
&=\EE\Big[\sum_{j=1}^N |s_j^{(\ell)}(\qb)-I_j|\Big]
\le 2B\,\EE\big[1-p_\tau^{(\ell)}(b_q\mid\qb)\big]
=2B\,\varepsilon_\tau(\qb).\label{eq:w1}
\end{flalign}
\\
\noindent
Since $\|\cdot\|_2\le \|\cdot\|_1$, we immediately get
\begin{flalign}
\|\wb_\tau-\wb\|_2 \le \|\wb_\tau-\wb\|_1 \le 2B\,\varepsilon_\tau(\qb).\label{eq:w2}
\end{flalign}
\\
\noindent
Next, note that $Z=\sum_{j=1}^N w_j$ and $Z_\tau=\sum_{j=1}^N w_{\tau,j}$, hence
\begin{flalign}
|Z-Z_\tau|
=
\Big|\sum_{j=1}^N (w_j-w_{\tau,j})\Big|
\le
\sum_{j=1}^N |w_j-w_{\tau,j}|
=
\|\wb_\tau-\wb\|_1
\le 2B\,\varepsilon_\tau(\qb).\label{eq:Zdiff}
\end{flalign}
\\
\noindent
Next, to bound $\|\wb\|_2$ using occupancy $B$,
we use that $w_j\in[0,1]$ and the following shows $Z\le B$.
Indeed, for a fixed table,
\[
\sum_{j=1}^N I_j
=
\#\{j\in[N]: b_j^{(\ell)}=b_q\}
\le B
\quad\text{by Assumption~\ref{as2}}.
\]
Taking expectation over $\Wb^{(\ell)}$ gives
\[
Z=\sum_{j=1}^N w_j
=\sum_{j=1}^N \EE[I_j]
=
\EE\Big[\sum_{j=1}^N I_j\Big]
\le B.
\]
Moreover, since $0\le w_j\le 1$,
\[
\|\wb\|_2^2=\sum_{j=1}^N w_j^2 \le \sum_{j=1}^N w_j = Z \le B,
\]
hence
\begin{flalign}
\|\wb\|_2 \le \sqrt{B}.\label{eq:w2bound}
\end{flalign}
\\
\noindent
Next,
we substitute eqs.~\eqref{eq:w2}, \eqref{eq:Zdiff}, and \eqref{eq:w2bound} into eq.~\eqref{eq:abbo}:
\begin{flalign}
\|\ab_{\tau}-\ab\|_2
&\le
\frac{\|\wb_\tau-\wb\|_2}{Z_{\tau,\mathrm{min}}}
+
\frac{|Z-Z_\tau|}{Z_{\tau,\min}Z_{\min}}\,\|\wb\|_2 \nonumber\\
&\le
\frac{2B\,\varepsilon_\tau(\qb)}{Z_{\tau,\min}}
+
\frac{2B\,\varepsilon_\tau(\qb)}{Z_{\tau,\min}Z_{\min}}\cdot \sqrt{B} \nonumber\\
&=
2B\Big(\frac{1}{Z_{\tau,\min}}+\frac{\sqrt{B}}{Z_{\min}Z_{\tau,\min}}\Big)\varepsilon_\tau(\qb).\label{eq:ab-final}
\end{flalign}
\\
\noindent
Finally, combining eq.~\eqref{eq:mis} with eq.~\eqref{eq:ab-final}:
% and eq.~\eqref{eq:peaking}:
\begin{flalign}
\|\yb_\tau(\qb)-\yb^*(\qb)\|_2
% &\le \|\mathbf{V}\|_2\,\|\ab_\tau-\ab\|_2 \nonumber\\
&\le
\|\mathbf{V}\|_2\cdot
2B\Big(\frac{1}{Z_{\tau,\min}}+\frac{\sqrt{B}}{Z_{\min}Z_{\tau,\min}}\Big)\varepsilon_\tau(\qb).
\end{flalign}
This proves the lemma.

\end{proof}

\subsection{Proof of Theorem~\ref{thm:main}}\label{proof_final}
\begin{theorem}[Final end-to-end bound via union bound]\label{thm:final}
Fix a query $\qb$. Suppose Assumptions~\ref{as1}--\ref{as2} hold and let
$\delta_\mathrm{L},\delta_\mathrm{M}\in(0,1)$. Assume
\[
L \;\ge\; \frac{2B^2\log(4/\delta_\mathrm{L})}{Z_{\tau,\min}^2}.
\]
Then, with probability at least $1-(\delta_\mathrm{L}+\delta_\mathrm{M})$,
\begin{align}
\|\mathbf{T}(\qb)-\yb^*(\qb)\|_2
\;\le\;&
\underbrace{\|\mathbf{V}\|_2\sqrt{\frac{8\log(2/\delta_\mathrm{M})}{M}}}_{\text{sampling error}}
\;+\;
\underbrace{C\,\|\mathbf{V}\|_2\sqrt{\frac{\log(4/\delta_\mathrm{L})}{L}}}_{\text{finite-$L$ error}}
\nonumber\\
&\;+\;
\underbrace{2B\Big(\frac{1}{Z_{\tau,\min}}+\frac{\sqrt{B}}{Z_{\min}Z_{\tau,\min}}\Big)
\,\varepsilon_\tau(\qb)\,\|\mathbf{V}\|_2}_{\text{soft vs.\ angular bias}},
\end{align}
where
\[
C=\Big(\frac{4\sqrt{2B}}{Z_{\tau,\min}}+\frac{\sqrt{2}B^{3/2}}{Z_{\tau,\min}^2}\Big),
\qquad
\varepsilon_\tau(\qb)=\mathbb{E}\!\left[1-p_\tau^{(\ell)}(b_q\mid \qb)\right].
\]
\end{theorem}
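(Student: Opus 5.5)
The proof follows the triangle inequality~\eqref{eq:triangle_b}, which splits $\|\mathbf{T}(\qb)-\yb^*(\qb)\|_2$ into three terms:
\begin{itemize}
\item the sampling deviation $\|\mathbf{T}(\qb)-\yb_{\tau,L}(\qb)\|_2$;
\item the finite-table deviation $\|\yb_{\tau,L}(\qb)-\yb_\tau(\qb)\|_2$;
\item the deterministic bias $\|\yb_\tau(\qb)-\yb^*(\qb)\|_2$.
\end{itemize}
Each term is already controlled by an earlier lemma, so the work is to assemble them on a common high-probability event and track the failure probabilities.

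\textbf{Step 1: bias term.} I invoke Lemma~\ref{lem:disc}. It holds deterministically for the fixed query, since it involves only the population quantities $w_{\tau,j}$ and $w_j$. It gives the bound
\[
2B\Big(\frac{1}{Z_{\tau,\min}}+\frac{\sqrt{B}}{Z_{\min}Z_{\tau,\min}}\Big)\varepsilon_\tau(\qb)\|\mathbf{V}\|_2 ,
\]
and consumes no failure probability.

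\textbf{Step 2: finite-table term.} I invoke Lemma~\ref{lem:Llem}. Under the hypothesis $L\ge 2B^2\log(4/\delta_\mathrm{L})/Z_{\tau,\min}^2$, it yields the bound $C\|\mathbf{V}\|_2\sqrt{\log(4/\delta_\mathrm{L})/L}$ on an event $E_L$ with $\PP(E_L^c)\le\delta_\mathrm{L}$.

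\textbf{Step 3: sampling term.} I invoke part~2 of Lemma~\ref{lem:sampling}. It gives the bound $\|\mathbf{V}\|_2\sqrt{8\log(2/\delta_\mathrm{M})/M}$ on an event $E_M$ with $\PP(E_M^c)\le\delta_\mathrm{M}$. This bound was proved conditionally on $\Wcal$ and then unconditioned, so it holds for the joint randomness.

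\textbf{Step 4: union bound.} A union bound gives $\PP(E_L\cap E_M)\ge 1-(\delta_\mathrm{L}+\delta_\mathrm{M})$. On this event, summing the three bounds gives exactly the displayed inequality.

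\textbf{Step 5: recovering Theorem~\ref{thm:main}.} Set $\delta_\mathrm{L}=\delta_\mathrm{M}=\delta/2$. Then $\log(4/\delta_\mathrm{L})=\log(8/\delta)$, which matches the condition on $L$ stated in Theorem~\ref{thm:main}. Also $\log(2/\delta_\mathrm{M})=\log(4/\delta)\le\log(8/\delta)$. The constants $C$ and $2B(\cdots)$ depend only on $(B,Z_{\min},Z_{\tau,\min})$, which gives the $\widetilde{\mathcal O}(L^{-1/2}+M^{-1/2}+\varepsilon_\tau(\qb))\|\mathbf{V}\|_2$ form.

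\textbf{Main obstacle.} The only delicate point is the measurability and independence bookkeeping in Step~4. $E_M$ is defined through the conditional law given $\Wcal$, while $E_L$ is a $\Wcal$-measurable event. Because Lemma~\ref{lem:sampling} bounds the conditional failure probability uniformly in $\Wcal$, the unconditional bound holds, and the union bound needs no independence.

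A second check is that $\widetilde Z>0$ on $E_L$, which Lemma~\ref{lem:zhat} provides. This ensures $\widetilde a_j$ and $p_j$ are well defined. If $S_1=0$, then all $\vb_j$ with $\widetilde a_j>0$ vanish, and $\mathbf{T}(\qb)=\yb_{\tau,L}(\qb)=0$ trivially.
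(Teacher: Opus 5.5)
Your proposal is correct and follows the paper's proof essentially step for step: the same three-term triangle inequality, Lemma~\ref{lem:disc} as a deterministic bias bound, Lemmas~\ref{lem:Llem} and~\ref{lem:sampling} for the two high-probability events, a union bound, and the choice $\delta_\mathrm{L}=\delta_\mathrm{M}=\delta/2$ to recover Theorem~\ref{thm:main}. Your extra remarks are sound bookkeeping that the paper handles implicitly through its law-of-iterated-expectation step: the uniform-in-$\Wcal$ conditional bound, and the well-definedness of $\widetilde a_j$ and $p_j$.
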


\begin{proof}
By the triangle inequality,
\begin{align}
\|\mathbf{T}(\qb)-\yb^*(\qb)\|_2
&\le
\|\mathbf{T}(\qb)-\yb_{\tau,L}(\qb)\|_2
+\|\yb_{\tau,L}(\qb)-\yb_\tau(\qb)\|_2
+\|\yb_\tau(\qb)-\yb^*(\qb)\|_2 .
\label{eq:triangle}
\end{align}
\\
\noindent
Define the events
\begin{align*}
\mathcal{E}_\mathrm{M}
&:=\Big\{\|\mathbf{T}(\qb)-\yb_{\tau,L}(\qb)\|_2
\le \|\mathbf{V}\|_2\sqrt{\tfrac{8\log(2/\delta_\mathrm{M})}{M}}\Big\},\\
\mathcal{E}_\mathrm{L}
&:=\Big\{\|\yb_{\tau,L}(\qb)-\yb_\tau(\qb)\|_2
\le C\,\|\mathbf{V}\|_2\sqrt{\tfrac{\log(4/\delta_\mathrm{L})}{L}}\Big\}.
\end{align*}
By the Lemma~\ref{lem:sampling},
$\mathbb{P}(\mathcal{E}_\mathrm{M})\ge 1-\delta_\mathrm{M}$,
and by Lemma~\ref{lem:Llem},
$\mathbb{P}(\mathcal{E}_\mathrm{L})\ge 1-\delta_\mathrm{L}$.
\\
\noindent
Moreover, Lemma~\ref{lem:disc} gives the deterministic bound
\[
\|\yb_\tau(\qb)-\yb^*(\qb)\|_2
\le
2B\Big(\frac{1}{Z_{\tau,\min}}+\frac{\sqrt{B}}{Z_{\min}Z_{\tau,\min}}\Big)
\varepsilon_\tau(\qb)\,\|\mathbf{V}\|_2 .
\]
\\
\noindent
On the intersection $\mathcal{E}_\mathrm{M}\cap\mathcal{E}_\mathrm{L}$,
substituting these three bounds into~\eqref{eq:triangle} yields the claimed
inequality. Finally, by the union bound,
\begin{flalign}
\mathbb{P}(\mathcal{E}_\mathrm{M}\cap\mathcal{E}_\mathrm{L})
\ge 1-\delta_\mathrm{M}-\delta_\mathrm{L}.\label{eq:union}
\end{flalign}
\\
\noindent
We conclude the proof by taking $\delta_M=\delta_L=\frac{\delta}{2}$ in eq.~\eqref{eq:union}.
\end{proof}

\section{Additional Proofs}
\label{app:add_proofs}

\begin{proof}[Proof of Lemma~\ref{lem:orth-corr}]
We first note that $X=\qb^\top\kb\sim\mathcal N(0,\|\qb\|_2^2)=\mathcal N(0,1)$, hence $\EE[X]=0$ and $\EE[X^2]=1$.
Also, for each $i$, $\hat{\wb}_i^\top\kb\sim\mathcal N(0,1)$ is symmetric about $0$, so $\EE[\operatorname{sign}(\hat{\wb}_i^\top\kb)]=0$ and therefore $\EE[Y]=\sum_i s_i\,\EE[\operatorname{sign}(\hat{\wb}_i^\top\kb)]=0$.
\noindent
For $\EE[Y^2]$, expand
$Y^2=\sum_{i=1}^P s_i^2 + \sum_{i\neq j} s_is_j\,\operatorname{sign}(\hat{\wb}_i^\top\kb)\operatorname{sign}(\hat{\wb}_j^\top\kb)$,
so
$\EE[Y^2]=\sum_{i=1}^P s_i^2+\sum_{i\neq j}s_is_j\,\EE[\operatorname{sign}(\hat{\wb}_i^\top\kb)\operatorname{sign}(\hat{\wb}_j^\top\kb)]$.
For $i\neq j$, the pair $(\hat{\wb}_i^\top\kb,\hat{\wb}_j^\top\kb)$ is jointly Gaussian with correlation
$\rho_{ij}=\hat{\wb}_i^\top\hat{\wb}_j$.
Using the standard identity $\EE[\operatorname{sign}(U)\operatorname{sign}(V)]=\frac{2}{\pi}\arcsin(\rho_{ij})$ for jointly Gaussian $(U,V)$, and the orthogonality assumption $\rho_{ij}=0$, we get
$\EE[\operatorname{sign}(\hat{\wb}_i^\top\kb)\operatorname{sign}(\hat{\wb}_j^\top\kb)]=0$; thus all cross terms vanish and $\EE[Y^2]=\sum_{i=1}^P s_i^2$.
\\
\noindent
For $\EE[XY]$, by linearity,
$\EE[XY]=\sum_{i=1}^P s_i\,\EE[(\qb^\top\kb)\operatorname{sign}(\hat{\wb}_i^\top\kb)]$.
Fix $i$ and decompose $\kb=r\hat{\wb}_i+\ub$ where $r:=\hat{\wb}_i^\top\kb\sim\mathcal N(0,1)$ and $\ub\perp \hat{\wb}_i$ is independent of $r$.
Then $\operatorname{sign}(\hat{\wb}_i^\top\kb)=\operatorname{sign}(r)$ and $\qb^\top\kb=r(\qb^\top\hat{\wb}_i)+\qb^\top\ub$.
By independence and symmetry, $\EE[\qb^\top\ub\cdot\operatorname{sign}(r)]=\EE[\qb^\top\ub]\EE[\operatorname{sign}(r)]=0$.
Also $r\operatorname{sign}(r)=|r|$, so $\EE[(\qb^\top\kb)\operatorname{sign}(\hat{\wb}_i^\top\kb)]=(\qb^\top\hat{\wb}_i)\EE|r|$.
Summing over $i$ yields $\EE[XY]=C\sum_{i=1}^P (\qb^\top\hat{\wb}_i)s_i$ with $C=\EE|r|=\sqrt{2/\pi}$.
\\
\noindent
Finally, since $\EE[X]=\EE[Y]=0$, the correlation is
$\Gamma=\frac{\EE[XY]}{\sqrt{\EE[X^2]\EE[Y^2]}}
= C\,\frac{\sum_i (\qb^\top\hat{\wb}_i)s_i}{\sqrt{\sum_i s_i^2}}$.
Writing $\sum_i (\qb^\top\hat{\wb}_i)s_i=\qb^\top \Wb^\ts\sbb$ and $\|\sbb\|_2=\sqrt{\sum_i s_i^2}$ gives
$\Gamma=C\,\qb^\top\Wb^\ts\hat{\sbb}$.
\end{proof}

\paragraph{Hard vs.\ soft scoring.}
Lemma~\ref{lem:orth-corr} shows that, under approximately orthogonal random planes,
the correlation between the true similarity $X=\qb^\top\kb$ and the aggregated score $Y$
takes the form
\[
\Gamma
=
C\,\qb^\top \Wb^\top \hat{\sbb}
=
C\,\frac{(\Wb\qb)^\top \sbb}{\|\sbb\|_2},
\qquad
C=\sqrt{\frac{2}{\pi}},
\]
where $\sbb=(s_1,\ldots,s_P)^\top$ collects the per-plane scores and
$\hat{\sbb}=\sbb/\|\sbb\|_2$.
Thus, the behavior of the scoring rule is governed by the alignment between the projected
query $\Wb\qb$ and the score vector $\sbb$.
\\
\noindent
\emph{Hard scoring.}
For hard LSH, $s_i^{\mathrm{hard}}=\operatorname{sign}(\qb^\top\hat{\wb}_i)$, equivalently
$\sbb^{\mathrm{hard}}=\operatorname{sign}(\Wb\qb)$.
In this case $\|\sbb^{\mathrm{hard}}\|_2=\sqrt{P}$ and
\[
\Gamma_{\mathrm{hard}}
=
C\,\frac{(\Wb\qb)^\top \operatorname{sign}(\Wb\qb)}{\sqrt{P}}
=
\frac{C}{\sqrt{P}}\|\Wb\qb\|_1.
\]
Hence, hard scoring depends only on the $\ell_1$ magnitude of the projected coordinates and
discards all directional information within each orthant. As a result, small perturbations
in $\Wb\qb$ can induce discontinuous changes in $\hat{\sbb}^{\mathrm{hard}}$, leading to
slow concentration and unstable rankings at finite $L$.
\\
\noindent
\emph{Soft scoring.}
For soft scoring, $s_i^{\mathrm{soft}}=\tanh(\qb^\top\hat{\wb}_i)$, i.e.,
$\sbb^{\mathrm{soft}}=\tanh(\Wb\qb)$ (applied elementwise). The corresponding correlation is
\[
\Gamma_{\mathrm{soft}}
=
C\,\frac{(\Wb\qb)^\top \tanh(\Wb\qb)}{\|\tanh(\Wb\qb)\|_2}.
\]
In the small-signal regime typical of high-dimensional random projections,
$|(\Wb\qb)_i|\ll 1$ and $\tanh(\Wb\qb)\approx \Wb\qb$, yielding
\[
\Gamma_{\mathrm{soft}}
\;\approx\;
C\,\|\Wb\qb\|_2,
\qquad
\hat{\sbb}^{\mathrm{soft}}
\;\approx\;
\frac{\Wb\qb}{\|\Wb\qb\|_2}.
\]
Thus, soft scoring preserves the directional structure of the projected coordinates and
varies smoothly with $\qb$.
\\
\noindent
\emph{Comparison.}
By the inequality $\|\xb\|_1\le\sqrt{P}\|\xb\|_2$, we have
\[
\Gamma_{\mathrm{hard}}
=
\frac{C}{\sqrt{P}}\|\Wb\qb\|_1
\;\le\;
C\,\|\Wb\qb\|_2
\;\approx\;
\Gamma_{\mathrm{soft}},
\]
with strict inequality unless the coordinates of $\Wb\qb$ have equal magnitude.
Consequently, soft scoring achieves strictly stronger alignment with the true similarity
signal in the regime relevant for ranking-based inference, explaining its faster
concentration and improved top-$k$ stability at finite $L$.

\section{Pseudocode for the custom scoring CUDA kernel}

\begin{algorithm}[H]
\caption{SoftHashCollisionScores (query-side)}
\label{alg:soft-hash-collision}
\begin{algorithmic}[1]
\State {\bfseries Input:} Query $\qb$; BucketProbs
$p_{\tau}^{(\ell)}(r \mid \qb)$ for all tables $\ell \in [L]$ and buckets
$r \in [R]$ (from Algorithm~\ref{alg:soft-bucket-probs}); bucket ids
$b_j^{(\ell)} \in [R]$ for all keys $j \in [N]$ and tables $\ell \in [L]$
(from Algorithm~\ref{alg:race-precompute}); value weights
$\nu_j \ge 0$ for all $j \in [N]$; mask $m_j \in \{0,1\}$ indicating whether
key $j$ is valid.
\vspace{0.25em}

\State {\bfseries Per-key computation (one thread per $j$):}
\For{$j = 1$ to $N$}
  \If{$m_j = 0$}
    \State $\widehat w_j(\qb) \gets -\infty$
  \Else
    \State $\displaystyle
      \widehat w_j(\qb) \gets
      ||v||_j \cdot \sum_{\ell=1}^{L}
        p_{\tau}^{(\ell)}\big(b_j^{(\ell)} \mid \qb\big)$
  \EndIf
\EndFor
\vspace{0.25em}

\State {\bfseries Output:} Collision scores
$\{\widehat w_j(\qb)\}_{j=1}^N$ for the $N$ keys.
\end{algorithmic}
\end{algorithm}
\end{document}